\def\eqref#1{equation~\ref{#1}}
\def\1{\bm{1}}
\DeclareMathAlphabet{\mathsfit}{\encodingdefault}{\sfdefault}{m}{sl}
\SetMathAlphabet{\mathsfit}{bold}{\encodingdefault}{\sfdefault}{bx}{n}
\newenvironment{manualtheorem}[1]{%
    \manualtheoreminner
    }{\endmanualtheoreminner}
\newcommand{\boldQ}{\mathbf{Q}}
\newcommand{\boldP}{\mathbf{P}}
\newcommand{\Qc}{\mathcal{Q}}
\newcommand{\Pc}{\mathcal{P}}
\newcommand{\Rc}{\mathcal{R}}
\newcommand{\Pun}{\boldP^\star}
\newcommand{\xhdr}[1]{\noindent{{\textbf{#1.} }}}
\theoremstyle{definition}
\newtheorem{definition}{Definition}
\newtheorem{theorem}{Theorem}
\newtheorem{lemma}{Lemma}
\definecolor{AwesomebyTom}{rgb}{.24, 0.69, 1}
\definecolor{AwesomebyVahid}{rgb}{1.0, 0.13, 0.32}
\newcommand{\method}{Detectron}
\date{}
\title{A Learning Based Hypothesis Test for Harmful Covariate Shift}
\author{Tom Ginsberg \&
Zhongyuan Liang \& Rahul G. Krishnan\\
Department of Computer Science\\
University of Toronto\\
Toronto, ON M5S 1A1 \\
\texttt{\{tomginsberg,zhongyuan,rahulgk\}@cs.toronto.edu} \\
}
\begin{document}

    \maketitle

    \begin{abstract}
        The ability to quickly and accurately identify covariate shift at test time is a critical and often overlooked component of safe machine learning systems deployed in high-risk domains.
        While methods exist for detecting when predictions should not be made on out-of-distribution test examples, identifying distributional level differences between training and test time can help determine when a model should be removed from the deployment setting and retrained.
        In this work, we define \textit{harmful covariate shift} ({\small HCS}) as a change in distribution that may weaken the generalization of a predictive model.
        To detect {\small HCS}, we use the discordance between an ensemble of classifiers trained to agree on training data and disagree on test data.
        We derive a loss function for training this ensemble and show that the disagreement rate and entropy represent powerful discriminative statistics for {\small HCS}.
        Empirically, we demonstrate the ability of our method to detect harmful covariate shift with statistical certainty on a variety of high-dimensional datasets.
        Across numerous domains and modalities, we show state-of-the-art performance compared to existing methods, particularly when the number of observed test samples is small\footnote{Code available at \url{https://github.com/rgklab/detectron}}.
    \end{abstract}

    \doparttoc 
    \faketableofcontents 

    \section{Introduction}\label{sec:introduction}
    Machine learning models operate on the assumption, albeit incorrectly that they will be deployed on data distributed identically to what they were trained on.
    The violation of this assumption is known as distribution shift and can often result in significant degradation of performance~\citep{learnundercov, failloud, mindperfgap, trustuncert}.
    There are several cases where a mismatch between training and deployment data results in very real consequences on human beings.
    In healthcare, machine learning models have been deployed for predicting the likelihood of sepsis.
    Yet, as~\citep{10.1001/jamainternmed.2021.3333} show, such models can be miscalibrated for large groups of individuals, directly affecting the quality of care they experience.
    The deployment of classifiers in the criminal justice system~\citep{AIissend45:online}, hiring and recruitment pipelines~\citep{Amazonsc17:online} and self-driving cars~\citep{selfdrive} have all seen humans affected by the failures of learning models.
    The need for methods that quickly detect, characterize and respond to distribution shift is, therefore, a fundamental problem in trustworthy machine learning.
    We study a special case of distribution shift, commonly known as \textit{covariate shift}, which considers shifts only in the distribution of input data $\mathbb{P}(X)$ while the relation between the inputs and outputs $\mathbb{P}(Y|X)$ remains fixed.
    In a standard deployment setting where ground truth labels are not available, covariate shift is the only type of distribution shift that can be identified.

    For practitioners, regulatory agencies and individuals to have faith in deployed predictive models without the need for laborious manual audits, we need methods for the identification of covariate shift that are \textit{sample-efficient} (identifying shifts from a small number of samples), \textit{informed} (identifying shifts relevant to the domain and learning algorithm), \textit{model-agnostic} (identifying shifts regardless of the functional class of the predictive model) and \textit{statistically sound} (identifying true shifts while avoiding false positives with high-confidence).
    \smallbreak
    We build off recent progress in understanding model performance under covariate shift using the PQ-learning framework~\citep{pqlearn}, a framework for selective classifiers that may either predict on or reject a given sample, that provides strong performance guarantees on arbitrary test distributions.
    Our work uses and extends PQ-learning to develop a practical, model-based hypothesis test, named \textit{the \method}, to identify potentially harmful covariate shifts given any existing classification model already in deployment.

    \smallbreak\noindent
    Our work makes the following key contributions:
    \begin{itemize}[leftmargin=*]
        \setlength\itemsep{0pt}
        \item We show how to construct an ensemble of classifiers that maximize out-of-domain disagreement while behaving consistently in the training domain.
        We propose the \emph{disagreement cross entropy} for models learned via continuous gradient-based methods (e.g., neural networks), as well as a generalization for those learned via discrete optimization (e.g., random forest).
        \item We show that the rejection rate and the entropy of the learning ensemble can be used to define a model-aware hypothesis test for covariate shift, the \method,\ that in idealized settings can provably detect covariate shift.
        \item On high-dimensional image and tabular data, using both neural networks and gradient boosted decision trees, our method outperforms state-of-the-art techniques for detecting covariate shift, particularly when given access to as few as ten test examples.
    \end{itemize}

    \begin{figure}[!htb]
        \centering
        \includegraphics[width=\linewidth]{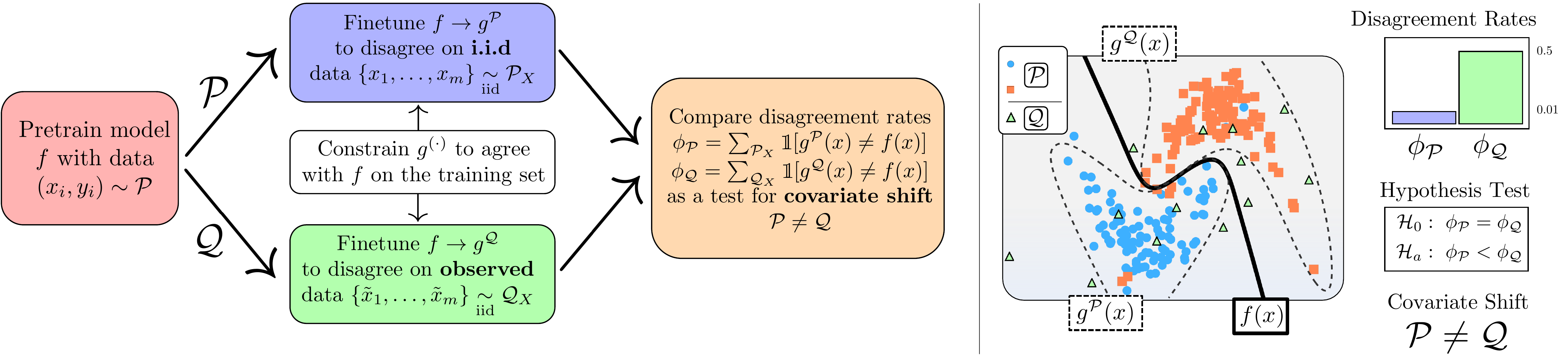}
        \vspace{-5mm}
        \caption{\small \textbf{Overview of the Detectron:} Starting with a base classifier $f$ trained on labeled samples from distribution $\Pc$ we train
        new \textit{\textbf{C}onstrained \textbf{D}isagreement \textbf{C}lassifiers} (CDCs) $g^\mathcal{P}$ and $g^\mathcal{Q}$ on a small sets of unseen samples from $\Pc$ as well as a unknown distribution $\Qc$.
        CDCs aim to maximize classification disagreement on unseen data while constrained to classify consistently with $f$ on their original training set.
        The rate $\phi$ that CDCs disagree is a powerful and sample efficient statistic for identifying covariate shift $\Pc\neq \Qc$.}
        \label{fig:detectron}
    \end{figure}

    \vspace{-1mm}

    \section{Background and Related Work}\label{sec:background-and-related-work}

    \xhdr{Covariate Shift Detection} Covariate shift is the tendency for a distribution at test time $p_{\text{test}}(x)$ to differ from that seen during training $p_{\text{train}}(x)$ while the underlying prediction concept $y$ remains fixed e.g. $p_{\text{train}}(y|x)=p_{\text{test}}(y|x)$.
    Many methods for detecting shift apply dimensionality reduction followed by statistical hypothesis tests for distributional differences in the outputs (from a reference and target)~\citep{failloud}.
    \citeauthor{failloud} show that using the softmax outputs of a pretrained classifier as low dimensional representations for performing univariate KS-tests, a method known as black box shift detection (BBSD)~\citep{bbsd}, is effective at confidently identifying several synthetic covariate shifts in imaging data (e.g.\ crops, rotations) given approximately 200 i.i.d samples.
    However, applying statistical tests to non-invertible representations of data can never guarantee to capture arbitrary covariate shifts, as there may always exist multiple distributions that collapse to the same test statistic~\citep{failuresofgen}.
    \cite{atheory, domainrep} introduce some of the earliest learning theoretic approaches for identifying and correcting for covariate shift based on discriminative learning with finite samples.
    More recent approaches for covariate shift detection including classifier two sample tests~\citep{paz2017revisiting}, deep kernel MMD~\citep{liu2020learning} and H-Divergence~\citep{zhao2022comparing} rely on analyzing the outputs of unsupervised learning models (see Appendix \autoref{subsec:baselines} for more details).
    In our work we take a transductive learning approach and construct a method to directly use the structure of a supervised classification problem to improve the statistical power for detecting shifts.

    \xhdr{Out of Distribution Detection}
    Out of distribution (OOD) detection focuses on identifying when a specific data point $x'$ admits low likelihood under the original training distributions $(p_{\text{train}}(x') \approx 0)$---a useful tool to have at inference time.
    ~\cite{densityratio, densityofstates} represent a broad class of work that uses density estimation to pose the identification of covariate shift as anomaly detection.
    Others, including ODIN~\citep{ODIN}, Deep Mahalanobis Detectors~\citep{mahalano} and, Gram Matrices~\citep{grammat} directly use the predictive model (e.g.\ information from the intermediate representations of neural networks).
    The majority of modern methods in this space have been designed exclusively for deep neural networks, an uncommon modelling choice particularly for tabular data~\citep{tabdatasurvey}.
    Related to OOD, is the task of estimating sources of uncertainty in model predictions~\citep{ensemble, trustuncert}.
    Naturally, uncertainty should be large when samples are OOD, however ~\citep{trustuncert} perform a large-scale empirical comparison of uncertainty estimation methods and find that while deep ensembles generally provide the best results, the quality of uncertainty estimations, regardless of method, consistently degrades with increasing covariate shift.

    \xhdr{Selective Classification and PQ Learning}
    Selective classification concerns building classifiers that may either predict on or reject on test samples~\citep{selectivenet}.
    Recent work by~\citep{pqlearn} develops a formal framework known as PQ learning which extends probably approximately correct (PAC) learning~\citep{Haussler90probablyapproximately} to arbitrary test distributions by allowing for selective classification.
    While PAC learning concerns the development of a classifier with a bounded finite-sample error rate on its training distribution, PQ learning seeks a selective classifier with jointly bounded finite-sample error and rejection rates on arbitrary test distributions.
    The Rejectron algorithms proposed therein builds an ensemble of models that produce different outputs relative to a perfect baseline on a set of unlabeled test samples.
    We provide a summary of the original Rejectron algorithm in the supplementary material (see \autoref{sec:learn-alg}).
    PQ-learning represents a major theoretical leap for learning guarantees under covariate shift;
    however, the majority of the underlying ideas have not been implemented/tested experimentally using real-world data.
    We show how to build a PQ learner by generalizing the Rejectron algorithm, overcoming several limitations and assumptions made by the original work including extending beyond simple binary classification to general multiclass and multilabel tasks and reducing the number of samples required for learning at each iteration.

    \vspace{-1mm}

    \section{Detectron}
    \label{sec:detectron}

    \xhdr{Problem Setup}
    Let $f: X\to Y$ be classification model from a function class $F$ that maps from space of covariates $X$ to a discrete set of classes $Y=\{1,\ldots,N\}$.
    We assume $f$ was trained on a dataset of labeled samples $\boldP =\{(x_i,y_i)\}_{i=1}^n$ where each $x_i$ is drawn identically from a distribution $\Pc$ over $X$.
    In deployment, $f$ is then made to predict on new unlabeled samples $\boldQ=\{\tilde{x}_i\}_{i=1}^m$ from a distribution $\Qc$ over $X$.
    Our goal is to determine whether $f$ may be trusted to do so accurately.
    The problem we address is how to automatically detect, from only a set of finite samples $\boldQ$, if the new covariate distribution $\Qc$ has shifted from $\Pc$ in such a way that $f$ can no longer be assumed to generalize --- we refer to this type of dataset shift as \textit{harmful covariate shift}.

    \xhdr{Harmful Covariate Shift} A shift in the data distribution is not always harmful.
    In many practical problems, a practitioner may use domain knowledge to embed invariances with the explicit goal of ensuring the predictive performance of a classifier does not, by construction, change under certain shifts.
    This may be done directly via translation invariance in convolutional neural networks, permutation invariance in DeepSets~\citep{deepsets} or indirectly via data augmentation or domain adaptation.
    Such practical heuristics can lead to models generalizing to a more broad range of distributions than can be characterized by just the training set.
    We refer to such an induced generalization set as $\Rc$.
    Although $\Rc$ is difficult to characterize and will in general depend on the model architecture, learning algorithm and training dataset, we seek a practical method for detecting shift that is explicitly tied to $\Rc$\footnote{See \autoref{sec:genreg} for further intuition and a concrete example of the generalization set.}.
    We present a more formal definition of \textit{harmful covariate shift} as well as a connection to domain identification and $\mathcal{A}$ distance~\citep{atheory} in \autoref{sec:harmfulshiftapp}.

    Our approach is based both on PQ learning and intuition from learning theory.
    If there exists a set of classifiers with the same generalization set $\Rc$ but behave inconsistently on samples from a distribution $\Qc$, then
    $\Qc$ must not be a member $\Rc$.
    Our strategy will be to create an ensemble of \textit{constrained disagreement classifiers}, classifiers constrained to predict consistently (i.e., predict the same as $f$) on $\Rc$ but as differently as possible on $\Qc$.
    If $\Qc$ is within $\Rc$ then such an ensemble will fail to predict differently.
    When we can find an ensemble that exhibits inconsistent behaviour on $\Qc$, there must be covariate shift that explicitly lies outside $\Rc$.
    To make the idea of constrained disagreement classifiers tangible we propose a simple definition which we will translate into a learning algorithm in the following sections.

    \smallbreak\noindent
    \xhdr{Constrained Disagreement Classifier (CDC)}
    A constrained disagreement classifier $g_{(f, \boldP, \boldQ)}$, or simply $g$ if $f$, $\boldP$ and $\boldQ$ are clear with context, is a classifier with the following properties:
    \vspace{-1mm}
    \begin{enumerate}[leftmargin=*]
        \setlength\itemsep{0pt}
        \item $g$ belongs to the same model class as $f\in F$ and is trained with the same algorithm on $\boldP$
        \item $g$ achieves similar performance on unseen samples drawn i.i.d to $\boldP$
        \item $g$ disagrees maximally with $f$ on elements of dataset $\boldQ$ while not violating 1 and 2
    \end{enumerate}

    Our definition of a CDC aims to explicitly capture the concept of a classifier that learns the same generalization region as $f$ while behaving as inconsistently as possible on $\boldQ$\footnote{A discussion on the relation between model complexity of $F$ and the behaviour of CDCs and associated limitations of our defintion is given in \autoref{sec:complexity}.}.

    \xhdr{Limitations of PQ Learning}
    As our work builds on PQ learning, we provide a summary of the original framework and clearly state the distinctions in our methodology.
    In PQ learning we seek a selective classifier $h$ that achieves
    a bounded tradeoff between its in distribution rejection rate $\text{rej}_h(\mathbf{x})$ and its out of distribution error $\operatorname{err}_h(\tilde{\mathbf{x}})$ with respect to a ground truth decision function $d$.
    Formally, this tradeoff is defined using the following learning theoretic bound (an extended description can be found in Appendix \ref{subsec:rejectron}).

    \smallbreak\noindent
    \textit{PQ learning~\citep{pqlearn}} Learner $L$ $(\epsilon, \delta, n)$-PQ-learns $F$ if for any distributions $\Pc, \Qc$ over $X$ and any ground truth function $d \in F$, its output $h\coloneqq L(\boldP, d(\boldP), \boldQ)$ satisfies
    \begin{equation}
        \mathbb{P}_{{\mathbf{x} \sim \Pc^{n}},\ {\tilde{\mathbf{x}} \sim \Qc^{n}}}\left[\text{rej}_h(\mathbf{x})+\operatorname{err}_h(\tilde{\mathbf{x}}) \leq \epsilon\right] \geq 1-\delta
        \label{eq:pqlearn}
    \end{equation}

    \citeauthor{pqlearn} propose the Rejectron algorithm for PQ learning (summarized in \autoref{sec:learn-alg}) in a noiseless binary classification setting with zero training error and access to a perfect empirical risk minimization oracle.
    We highlight several pitfalls that prevent a practical realization of Rejectron (1) the classification problem must be binary (2) the optimization objective at each step requires an ERM query with $\Omega(|\boldP|^2)$ samples and (3) most significantly there is no process to control overfitting.
    By tackling the above issues we derive a practical PQ learner and propose a powerful hypothesis test to identify harmful covariate shift.

    \xhdr{Learning to Disagree with the Disagreement Cross Entropy}
    \label{subsec:dis}
    To train a classifier to disagree in the binary setting, it suffices to flip the labels.
    However, in the multi-class classification, it is unclear what a good objective function is.
    We formulate an explicit loss function that can be minimized via gradient descent to learn a CDC.
    For classification problems, letting $\hat{y} \coloneqq g(x_i)$ be the predictive distribution over $N$ classes with the $i^{\text{th}}$ denoted by $\hat{y}_i$, $f(x_i)\in \{1,\dots N\}$
    the label predicted by $f$ and $\mathbbm{1}_{(\cdot )}$ a binary indicator, we define the \textit{disagreement-cross-entropy} (DCE) $\footnotesize \tilde{\ell}$ as:
    \vspace{-2mm}
    \begin{equation}
        \tilde{\ell}(\hat{y}, f(x_i)) =  \frac{1}{1-N} \sum_{c=1}^N \mathbbm{1}_{f(x_i) \neq c} \log(\hat{y}_c)
        \label{eq:anitcross}
    \end{equation}
    $\tilde{\ell}$ corresponds to taking the cross entropy of $\hat{y}$ with the uniform distribution over all classes except $f(x_i)$.
    Since the primary criteria is that $g(x_i)$ disagrees with $f(x_i)$, $\tilde{\ell}$ is designed to minimize the probability of $g$'s prediction for the output of $f$'s while maximizing its overall entropy.
    Our definition of $\tilde{\ell}$ is stable to optimize, has a bounded global minimum, and can be shown to have desirable properties for disagreement learning, see \autoref{sec:cdc-learn} for more details.

    Our goal is to agree on $\boldP$ and disagree on $\boldQ$.
    Consequently, we learn with the loss in \autoref{eq:loss}. $\ell$ denotes the standard cross entropy loss and $\tilde{\ell}$ is the disagreement cross entropy. $\lambda$ is a scalar parameter that controls the trade off between agreement and disagreement.
    \begin{equation}
        \mathcal{L}_\text{CDC}(\mathbf{P}, \mathbf{Q}) = \frac{1}{\left|\mathbf{P}\cup \mathbf{Q}\right|}\left(\sum_{(x_i,y_i)\in \mathbf{P}} \ell(g(x_i), y_i) + \lambda\sum_{\tilde{x}_i\in \mathbf{Q}} \tilde{\ell}(g(\tilde{x}_i), f(\tilde{x}_i)) \right)
        \label{eq:loss}
    \end{equation}
    When learning CDCs in practice, $\mathcal{L}_{\text{CDC}}$ should be combined with any additional regularization and data augmentation used in the original training process of $f$ to ensure that we retain the true generalization region of $f$.
    Furthermore, training and validation metrics must be closely monitored on unseen samples from $\Pc$ to ensure that $g$ achieves similar generalization performance on $\Pc$.

    \xhdr{Choosing $\lambda$} In the original formulation of Rejectron, selective classifiers are trained on a dataset consisting of $\boldP$ replicated $|\boldP|$ times and $\boldQ$.
    Calling an ERM oracle on this data ensures that a misclassification on $\boldP$ is significantly more costly than one on $\boldQ$ but requires $\Omega(|\boldP|^2 )$ samples, an impractical number for large datasets.
    We show that instead we can choose the scalar parameter $\lambda$ in \autoref{eq:loss} to set learning $\boldP$ as the primary learning objective and only when it cannot be improved, we allow $g$ to learn how to disagree on $\boldQ$.
    The reasoning is a simple counting argument.
    Suppose agreeing on each sample in $\boldP$ incurs a reward of $1$ and disagreeing with each sample in $\boldQ$ a reward of $\lambda$.
    To encourage agreement on $\boldP$ as the primary objective, we set $\lambda$ such that the extra reward obtained by going from \textit{zero} to \textit{all} disagreements on $\boldQ$ is less than that achieved with only one extra agreement on $\boldP$, this gives $\lambda|\boldQ| < 1$.
    Practically, we chose $\lambda=1/(|\boldQ| + 1)$ and find that no tuning is required.

    Some predictive models require learning with non-differentiable loss functions and/or discrete optimization (e.g., random forest).
    In such cases we fall back and formulate a general disagreement loss by duplicating each sample in $\boldQ$ $(N-1)$ times, giving each a unique label that is not the target and weight of $1/(N-1)$.
    In the continuous case, this corresponds exactly to \autoref{eq:anitcross}.

    To learn richer disagreement rules, we create an
    ensemble of CDCs where the $k^{\text{th}}$ model is trained only to disagree on the subset of
    $\boldQ$ that has yet to be disagreed on by models $1$ through $k-1$.
    The final disagreement rate $\phi_\boldQ$ is the fraction of unlabelled samples where any CDC provides an alternate decision from $f$.
    In what follows we use this rate to characterize shift.

    \xhdr{From Constrained Disagreement to Detecting Shift with Hypothesis Tests}
    A natural way to apply the concept of constrained disagreement to the identification of covariate shift is
    to partition $\boldQ$ into two sets, using the first to train a CDC ensemble and the second to compute an unbiased estimate
    of its held out disagreement rate $\phi_\Qc$.
    We would statistically compare this disagreement rate using a $2\times 2$ exact hypothesis test against a baseline estimate for the disagreement rate on $\Pc$.
    The following shows that this results in a provably correct method to detect shift.
    \begin{theorem}[Disagreement implies covariate shift]
        \label{th:dis}
        Let $f$ be a classifier trained on dataset $\boldP$ consisting of samples drawn identically from $\Pc$ and their corresponding labels.
        Let $g$ be a classifier that is observed to agree (classify identically) with $f$ on $\boldP$ and disagree on a dataset $\boldQ$ drawn from $\Qc$.
        If the rate which $g$ disagrees with $f$ on $n$ unseen samples from $\Qc$ is greater than that from $n$ unseen samples from $\Pc$ w.p. greater than $p^\star \coloneqq \frac{1}{2}\left(1- 4^{-n} \binom{2n}{n}\right)$ there must be covariate shift.
    \end{theorem}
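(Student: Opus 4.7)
The plan is to prove the contrapositive: assuming no covariate shift ($\Pc = \Qc$), show that the probability the disagreement count on $n$ fresh $\Qc$-samples strictly exceeds that on $n$ fresh $\Pc$-samples is at most $p^\star$. Let $X$ and $Y$ denote these two disagreement counts. Because $g$'s prediction on an independent sample either matches $f$'s or does not, and because we draw the two batches i.i.d.\ and independently, under $\Pc=\Qc$ the variables $X$ and $Y$ are i.i.d.\ $\mathrm{Bin}(n, p)$ where $p$ is the true pointwise disagreement probability of $g$ and $f$ on a draw from $\Pc$.

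The first key observation is a symmetry argument: since $X$ and $Y$ are identically distributed and independent, $\Pr[X>Y] = \Pr[Y>X]$, so
\[
\Pr[X>Y] \;=\; \tfrac{1}{2}\bigl(1-\Pr[X=Y]\bigr).
\]
Hence upper-bounding $\Pr[X>Y]$ uniformly over $p$ is equivalent to lower-bounding the collision probability $\Pr[X=Y]$ uniformly over $p$.

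The second step is to show that $\Pr[X=Y]\geq 4^{-n}\binom{2n}{n}$ for every $p\in[0,1]$, with equality at $p=1/2$. I would use the Fourier (characteristic-function) representation of the difference of independent integer-valued random variables,
\[
\Pr[X=Y] \;=\; \frac{1}{2\pi}\int_{-\pi}^{\pi} |\phi_X(t)|^{2}\,dt, \qquad |\phi_X(t)|^{2} \;=\; \bigl(1 - 2p(1-p)(1-\cos t)\bigr)^{n},
\]
which follows from $\phi_X(t)=(1-p+pe^{it})^n$ and $|1-p+pe^{it}|^2 = 1 - 2p(1-p)(1-\cos t)$. For every $t$, the integrand is a decreasing function of $p(1-p)$, which is maximized at $p=1/2$ (value $1/4$) and remains in $[0,1]$; hence the integrand is pointwise minimized at $p=1/2$, and so is the integral. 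At $p=1/2$ we can evaluate the collision probability in closed form as $\sum_{k=0}^{n}\binom{n}{k}^{2}4^{-n} = 4^{-n}\binom{2n}{n}$ by Vandermonde's identity, which gives the desired worst case.

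Combining Steps 1 and 2 under $\Pc=\Qc$ yields $\Pr[X>Y]\leq \tfrac{1}{2}(1-4^{-n}\binom{2n}{n}) = p^\star$, so the observed probability strictly exceeding $p^\star$ contradicts $\Pc=\Qc$, proving the theorem. The main obstacle is Step 2, the uniform-in-$p$ lower bound on the collision probability, because the naive expression $\sum_{k}\binom{n}{k}^{2}p^{2k}(1-p)^{2(n-k)}$ is cumbersome to minimize directly; the characteristic-function identity makes the monotonicity in $p(1-p)$ transparent and avoids any calculus in $p$. (A probabilistic alternative, noting that $X-Y$ is a symmetric random walk whose variance grows with $p(1-p)$ and hence whose mass at $0$ shrinks, is possible but gives a less clean bound.)
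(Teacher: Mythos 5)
Your proposal is correct and follows essentially the same route as the paper's proof: the contrapositive reduction to i.i.d.\ $\mathrm{Bin}(n,p)$ counts, the symmetry identity $\Pr[X>Y]=\tfrac{1}{2}(1-\Pr[X=Y])$, and the characteristic-function argument showing the collision probability is minimized at $p=1/2$ (your factorization $1-2p(1-p)(1-\cos t)$ is the same quadratic the paper minimizes via $p^\star=-b/(2a)$). The only difference is cosmetic: you evaluate the $p=1/2$ collision probability directly as $\sum_k\binom{n}{k}^2 4^{-n}=4^{-n}\binom{2n}{n}$ via Vandermonde's identity, which is a cleaner shortcut than the paper's contour-integral computation of $\frac{1}{2\pi}\int_{-\pi}^{\pi}2^{-n}(1+\cos t)^n\,dt$.
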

    \noindent
    \textit{Sketch of Proof}.
    We show that under the null hypothesis where $\Pc = \Qc$ the tightest upper bound on the probability that $g$ is more likely to disagree on $\Qc$ compared to $\Pc$ is $p^\star$.
    The contrapositive argument then states
    if we deem the probability to be greater than $p^\star$ there must be a covariate shift.
    This result motivates a hypothesis testing approach to determine how probable it is that $g$ is truly more likely to disagree on $\Qc$ given only a set of finite observations.
    The full proof can be found in \autoref{sec:proofs}.


    Our theory, while simple, has a limitation that prevents its direct application.
    Any approach that requires unseen samples from $\boldQ$ is ill-suited for the low data regime, as it requires splitting $\boldQ$ leaving an even smaller set
    for computing the disagreement rate.
    Estimators from small samples result in high variance and ultimately low statistical power.
    Since our objective is to detect covariate shift from as few test samples as possible, splitting $\boldQ$ is not a good option.
    To tackle this issue practically, we take a transductive approach based on intuition from learning theory: creating learning models to disagree on samples from $\boldQ$ while generalizing to $\Rc$ is a far easier task when $\Qc$ is not in $\Rc$.
    We can therefore use the \emph{relative increase} in disagreement between CDCs on $\boldQ$ and $\boldP$ to capture a quantity that is nearly as informative as the unbiased statistic without reducing samples from $\boldQ$ that we can use.

    \xhdr{The Detectron Test}
    Our proposed method is to train two CDC ensembles $g_{\boldQ}$ and $g_{\boldP}$.
    $g_{\boldQ}$ is trained to disagree on \textbf{all} of $\boldQ$ and a $g_{\boldP}$ is a baseline trained to disagree an unseen set $\Pun$ from $\Pc$.
    Once trained, we compute the ratio of samples $\phi_\boldQ$ and $\phi_\boldP$ that $g_{\boldQ}$ and $g_{\boldP}$ learn to disagree on their sets respectively.
    Under the null hypothesis where $\Qc$ belongs to the generalization region of $f$ and $\Pc$, $\mathbb{E}[\phi_\boldQ] \leq \mathbb{E}[\phi_\boldP]$, while harmful shift is expressed as a one sided alternative $H_a: \mathbb{E}[\phi_\boldQ] > \mathbb{E}[\phi_\boldP]$ (i.e., it is easier to learn how to reject on $\Qc$ compared to $\Pc$).
    We refer to this test as \textit{\method\ (Disagreement)}.
    To compute the test result at a significance level $\alpha$ we first estimate the null distribution of $\phi_\boldP$ for a fixed sample size $n$ by training the \method\ for $K$ calibration rounds with different random seeds and sets $\Pun$.
    The test result is significant if the observed disagreement rate $\phi_{\boldQ}$ is greater than the $(1-\alpha)$ quantile of the null distribution.
    For more information on the testing procedure see \autoref{alg:detectron} below and a detailed description in \autoref{sec:hyp}.

    We consider an additional variant, \textit{\method\ (Entropy)}, which computes the prediction entropy of each sample under the CDC ensemble instead of relying solely on disagreement rates.
    The CDC entropy is computed from the mean probabilities over each $N$ classes of the base classifier $f$ and set of $k$ CDCs $g_1,\dots,g_k$.
    \vspace{-2mm}
    \begin{equation}
        \text{CDC}_{\text{entropy}}(x)=-\sum_{c=1}^N {\hat{p}_c\log(\hat{p}_c)}\ \text{ where }\ \hat{p} \coloneqq \frac{1}{k+1}\left(f(x) + \sum_{i=1}^k g_i(x)\right)
        \label{eq:cdc_entropy_def}
    \end{equation}
    We use a KS test to compute a $p$-value for covariate shift directly on the entropy distributions computed for $\boldQ$ and $\Pun$ and guarantee significance using the same strategy as above.
    The intuition for \textit{\method\ (Entropy)} draws from the fact that when CDCs satisfy their objective (i.e., in the case of harmful shift) they
    learn to predict with high entropy on $\boldQ$ and low entropy on $\Pun$, resulting in a natural way to distinguish between distributions.

    \begin{algorithm}[!htbt]
        \DontPrintSemicolon
        \SetAlgoLined
        \SetNoFillComment
        \SetKwRepeat{Do}{do}{while}%
        \caption{The Detectron algorithm for detecting harmful covariate shift}\label{alg:detectron}
        \KwIn{$\boldP$: labeled dataset, $\boldQ$: unlabeled dataset, ${L}$: learning algorithm,\\
            $K$: calibration rounds $=100$, $\aleph$: ensemble size $=5$, $\alpha$: significance level $=0.05$}
        \KwOut{test result for harmful covariate shift at significance level $\alpha$}
        \BlankLine
        $\boldP_{\text{train}},\ \boldP_{\text{val}},\ \Pun\ \gets\ \text{Partition}(\boldP)$

        $N\ \gets\ |\boldQ|$;\ $\Phi_{\boldP} \gets [\ ]$

        $f\ \gets\ {L}(\boldP_{\text{train}},\ \boldP_{\text{val}})$\ \tcp*{ Load or train a base classifier on $\boldP$}

        \Repeat{K iterations elapse}{
            $\mathbf{p}^\star\ \gets\ \text{ RandomSample}(\Pun, N)$

            \tcp{ Train an ensemble of CDCs on $\Pun$}
            \While{$n>0$ and iterations $\leq \aleph$}{
                $g\ \gets\ \text{ConstrainedDisagreement}({L}, \boldP_{\text{train}}, \boldP_{\text{val}}, \mathbf{p}^\star, f)$ \tcp*{See Appendix TODO}

                $\mathbf{p}^\star \ \gets\ \{x\ |\ x\in \mathbf{p}^\star\text{ and } f(x)=g(x)\}$\tcp*{Filter out disagreed on data}

                $\phi_\boldP \ \gets 1-|\mathbf{p}^\star|/N$
                \tcp*{Update disagreement rate}}

            $\text{Append}\ \phi_{\boldP}\text{ to }\Phi_\boldP$}
        \tcp{ Train an ensemble of CDCs on $\boldQ$}
        \While{$n>0$ and iterations $\leq \aleph$}{
            $g\ \gets\ \text{ConstrainedDisagreement}({L}, \boldP_{\text{train}}, \boldP_{\text{val}}, \boldQ, f)$

            $\boldQ\ \gets\ \{x\ |\ x\in \boldQ\text{ and } f(x)=g(x)\}$

            $\phi_\boldQ \ \gets 1-|\boldQ|/N$
        }
        \Return{$\phi_\boldQ > \left[{(1-\alpha)}\text{ quantile of }\Phi_\boldP \right]$}
    \end{algorithm}
    \vspace{-1mm}

    \section{Empirical Evaluation}
    \label{sec:emp_eval}
    Our experiments are carried out on natural distribution shifts across multiple domains, modalities, and model types.
    We use the \textit{CIFAR-10.1} dataset~\citep{cifar101} where shift comes from subtle changes in the dataset creation processes,
    the \textit{Camelyon17 dataset}~\citep{camelyon} for metastases detection in histopathological slides from multiple hospitals, as well as the \textit{UCI heart disease} dataset~\citep{misc_heart_disease_45} which contains tabular features collected across international health systems and indicators of heart disease.
    We present unseen source an target domain performance of
    base models trained on source data in Appendix \autoref{tab:datasets} which shows significant performance drops as an indicator for the hamrfulness of these shifts.
    See \autoref{sec:data} for more details on datasets.

    \xhdr{Learning Constrained Disagreement}
    We begin by training ensembles of $10$ CDCs using the \textit{disagreement cross entropy} (DCE) with CIFAR-10 as $\boldP$ and CIFAR-10.1 as $\boldQ$ for 100 random runs at a sample size of $50$ (see \autoref{sec:expdet} for training details).
    The results in \autoref{fig:disrates} empirically validates minimizing the DCE as a CDC learning objective.
    The first observation is that when an unseen set is drawn from a shifted distribution $\Qc$, the empirical disagreement rate $\phi_\boldQ$ grows significantly larger
    than the baseline disagreement rate $\phi_\boldP$.
    Next, we see that CDCs preserve accuracy on data from the training distribution.
    Finally, as the ensemble size increases (and disagreed upon points are removed) we see that the accuracy of the classifier increases.
    This indicates that the points that are disagreed upon early on in the algorithm are those that would have been misclassified.

    \begin{figure}[!htb]
        \vspace{-3mm}
        \centering
        \includegraphics[width=\linewidth]{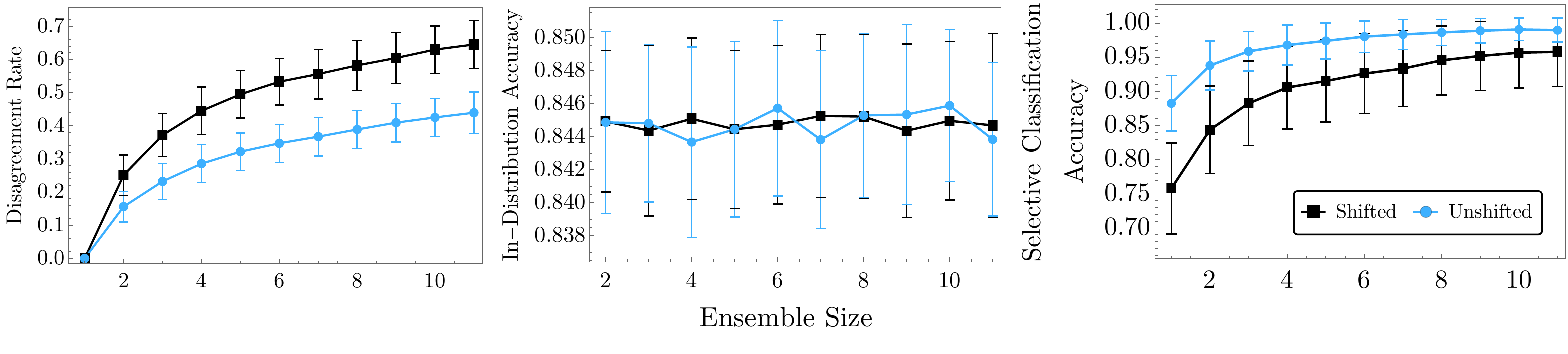}
        \vspace{-7mm}
        \caption{\small \textbf{Ensemble Size vs Properties of Constrained disagreement classifiers on CIFAR-10/10.1:}
        (Left) We see that for all ensemble sizes, there is lower disagreement on unshifted data (CIFAR-10) compared to disagreement on shifted data (CIFAR-10.1). (Center) Constrained disagreement does not compromise in-distribution performance.
            (Right) As the ensemble grows the selective classification accuracy computed on the set of test examples that all models agree on increases both on in-distribution and out-of-distribution data.
            Confidence intervals are computed as $\pm$ one standard deviation across experiments.}
        \vspace{1mm}
        \label{fig:disrates}
    \end{figure}

    \xhdr{Shift Detection Setup} We evaluate the \method\ in a standard two-sample testing scenario similar to prior work~\citep{zhao2022comparing}.
    Given two datasets $\boldP = \{(x_i,y_i)\}_{i=1}^n$ ($x_i$ drawn from $\Pc$) and $\boldQ = \{\tilde{x}_i\}_{i=1}^m$ ($\tilde{x}_i$ drawn from $\Qc$) and classifier $f$,
    we seek to rule out the null hypothesis ($\Pc = \Qc$) at the $5\%$ significance level.
    To guarantee fixed significance we employ a permutation test by first sampling from the distribution of test statistics
    derived by the \method\ where the null hypothesis $\Pc=\Qc$ holds (i.e., $\boldQ$ is drawn $\Pc$).
    We then compute a threshold over the empirical test statistic distribution that sets the false positive rate to $5\%$ (see \autoref{sec:hyp} \autoref{fig:detectron_permutation}).
    This step can be performed in advance of deployment as it only requires access to $\boldP$.
    To mimic deployment settings where we wish to identify covariate shift quickly,
    we assume access to far more samples from $\Pc$ compared to $\Qc$.
    For each dataset, we begin by training a base classifier on the unshifted dataset.
    We evaluate the detection of covariate shift on 100 randomly selected test sets of $n=$ 10, 20 and 50 samples from $\Qc$.
    In all cases we train a maximum ensemble size of 5 (parameter $\aleph$ in \autoref{alg:detectron}).
    To prevent CDCs from overfitting in the case of small test set sizes, we perform early stopping if in-distribution validation performance drops by over 5\% from the measured performance of the base classifier. Hyperparameters and training details for all models can be found in \autoref{sec:expdet}.

    \xhdr{Evaluation} We report the \textit{True Positive Rate at 5\% Significance Level (TPR@5)} aggregated over $100$ randomly selected sets $\boldQ$.
    This signifies how often our method correctly identifies covariate shift ($\Pc\neq\Qc$) while only incorrectly identifying shift 5\% of the time.
    This is also referred to as the statistical power of a test at a significance level ($\alpha$) of 5\%.

    \xhdr{Baselines} We compare the \method\ against several methods for OOD detection, uncertainty estimation and covariate shift detection. \textit{Deep Ensembles}~\cite{trustuncert} using both (1) \textit{disagreement} and (2) \textit{entropy} scoring methods as a direct ablation to the CDC approach (3) \textit{Black Box Shift Detection (BBSD)}~\citep{bbsd}.
    (4) \textit{Relative Mahalanobis Distance (RMD)}~\citep{relmahala}.
    (5) \textit{Classifier Two Sample Test (CTST)}~\citep{paz2017revisiting}.
    (6) \textit{Deep Kernel MMD (MMD-D)}~\citep{liu2020learning}.
    (7) \textit{H Divergence (H-Div)}~\citep{zhao2022comparing}.
    For more information on baselines see Appendix \autoref{subsec:baselines}.

    \xhdr{Shift Detection Experiments}
    We begin with an analysis of the performance of the \method\ on the UCI Heart Disease dataset.
    Using a sample size ranging from 10 to 100 we compute the TPR@5 (based on 100 random seeds) and plot the results in \autoref{fig:uci_plot}.
    We use gradient boosted trees (XGBoost~\citep{xgb}) for \method\ and CTST methods while the remaining baselines use a 2 layer MLP that achieves similar test performance.
    We report the mean and standard error of TPR@5 for sample sizes of 10, 20 and 50 on all datasets in \autoref{tab:results}.

    \begin{figure}[!htb]
        \centering
        \includegraphics[width=.95\textwidth]{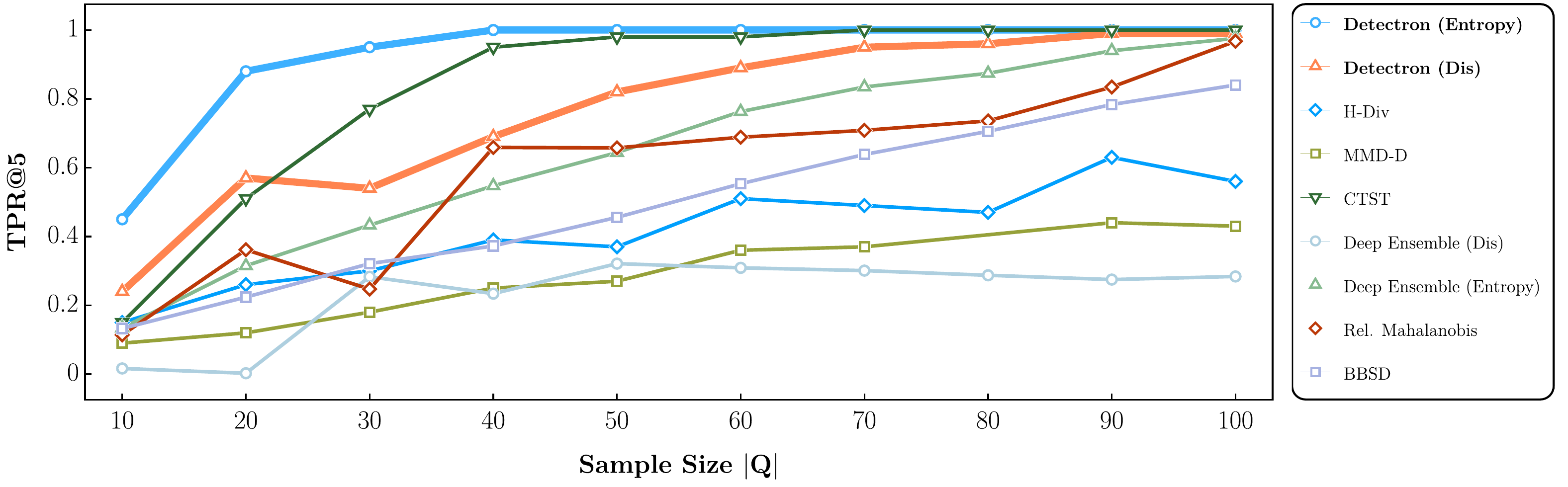}
        \caption{\small \textbf{True positive rate at the 5\% significance level} for the \method\ and baseline methods for detection of covariate shift on the UCI heart disease dataset.
        The \method\ (Entropy) is shown to uniformly outperform baselines.
        Confidence intervals are excluded for visual clarity but are found in \autoref{tab:results}.}
        \label{fig:uci_plot}
    \end{figure}

    \begin{table}[!ht]
    \centering
    \caption{\small \textbf{Results (true positive rate at the 5\% significance level) for detection of harmful covariate shift} on CIFAR-10.1, Camelyon 17 and UCI Heart Disease benchmarks. 
    The \textbf{best} result for each column is bolded, results that are within $\underline{\text{2\% of the best}}$ are underlined and the \textit{best baseline} method is italicized.}
    \vspace{5mm}
    \setlength{\tabcolsep}{5pt}
    {\renewcommand{\arraystretch}{2}%
    \resizebox{\textwidth}{!}{  
\begin{tabular}{r|ccc|ccc|ccc}
\toprule[1.5pt]
\multicolumn{1}{r|}{}                                           & \multicolumn{3}{c|}{\textbf{\large CIFAR 10.1}}   & \multicolumn{3}{c|}{\textbf{\large Camelyon 17}} &  \multicolumn{3}{c}{\textbf{\large {UCI Heart Disease}}} \\
\multicolumn{1}{r|}{\large $|\boldQ|$}                                 & 10 & 20 & \multicolumn{1}{c|}{50} & 10    & 20    & \multicolumn{1}{c|}{50}   & 10   & 20   & \multicolumn{1}{c}{50}  \\\midrule[1.5pt]
\thead{ BBSD}                                                         & $.07\pm.03$ & $.05 \pm .02$ & $.12 \pm .03$ & $.16 \pm .04$ & $.38 \pm .05$ & $.87 \pm .03$ & $.13 \pm .03$ & $.22 \pm .04$  & $.46 \pm .05$  \\
\thead{ Rel. Mahalanobis}                                         &   $.05 \pm .02$ & $.03 \pm .03$ & $.04 \pm .02$ &   $.16 \pm .04$ & $.40 \pm .05$ & $\mathit{.89 \pm .03}$  &   $.11 \pm .03$   &   $.36 \pm .05$    &   $.66 \pm .05$                            \\
\thead{ Deep Ensemble\\(Dis)} & $.23 \pm .04$ & $.40 \pm .05$ & $\mathit{.74 \pm .04}$ &  $.10 \pm .03$ & $.11 \pm .03$ & $.13 \pm .03$ &  $.02 \pm .01$   &   $.00 \pm .00$   &  $.32 \pm .05$                       \\
\thead{ Deep Ensemble\\ (Entropy)}  &  $\mathit{.33 \pm .05}$ & $\mathit{.52 \pm .05}$ & $.68 \pm .05$ &   $.14 \pm .03$ & $.26 \pm .04$ & $.82 \pm .04$ &   $.13 \pm .03$   &   $.32 \pm .05$   &  $.64 \pm .05$                       \\
\thead{CTST}                                                          &  $.03 \pm .02$  &  $.04 \pm .02$  &   $.04 \pm .02$  &    $.11 \pm .03$   &   $.59 \pm .05$    &   $.59 \pm .05$   &   $\mathit{.15 \pm .04}$ & $\mathit{.51 \pm .05}$ & $\underline{\mathit{.98 \pm .01}}$                         \\
\thead{ MMD-D}                                                           &  $.24 \pm .04$ &  $.10 \pm .03$ &  $.05 \pm .02$  & $\mathit{.42 \pm .05}$ &    $\mathit{.62 \pm .05}$  &  $.69 \pm .05$   &   $.09 \pm .03$   &   $.12 \pm .03$   &   $.27 \pm .04$                       \\
\thead{ H-Div}                                                           & $.02\pm .01$   &  $.05\pm .02$ &  $.04\pm .02$ &    $.03\pm .02$   &   $.07\pm .03$    &  $.23\pm .04$  &  $\mathit{.15 \pm .04}$ & $.26 \pm .04$ & $.37 \pm .05$ \\\midrule[1.5pt]
\textbf{\thead{ Detectron \\ (Dis)}} &  $\mathbf{.37 \pm .05}$  &  $\underline{.54 \pm .05}$  &   $.83 \pm .04$ &    $\mathbf{.97 \pm .02}$   &  $\mathbf{1.0 \pm .00}$  &  $.96 \pm .02$ & $.24 \pm 0.04$  &  $.57 \pm 0.05$    &   $.82 \pm 0.04$    \\
\textbf{\thead{ Detectron \\ (Entropy)}} &  $\underline{.35 \pm .05}$  &  $\mathbf{.56 \pm .05}$  &   $\mathbf{.92 \pm .03}$  &    $\mathbf{.97 \pm .02}$   &   $\mathbf{1.0 \pm .00}$    &   $\mathbf{1.0 \pm .00}$  &   $\mathbf{.45 \pm .05}$  &   $\mathbf{.88 \pm 0.03}$   &  $\mathbf{1.0\pm .00}$ \\
\end{tabular}
}}
\label{tab:results}
\end{table}

    \subsection{Discussion}
    \textit{Overall Performance}.
    We observe in the bottom rows of \autoref{tab:results} that \method\ methods outperform all baselines across all three tasks.
    This confirms our intuition that designing distribution tests based specifically on available data and the outputs of learning algorithms is a promising avenue for improving existing methods in the high dimensional/low data regime.

    \smallbreak\noindent
    \textit{Sample Efficiency}.
    For more significant shifts (Camelyon and UCI), we see in \autoref{tab:results} the most significant improvements over baselines in the lowest sample regime (10 data points).
    The fine-grained result in \autoref{fig:uci_plot} shows that CTST catches up to \method\ at $40$ samples while deep ensemble, BBSD, and Mahalanobis catch up at $100$.

    \smallbreak\noindent
    \textit{Disagreement vs Entropy}.
    For the experiments on imaging datasets with deep neural networks \method\ (Disagreement) often performs nearly as well as \method\ (Entropy),
    while \method\ (Entropy) is strictly superior for the UCI dataset.
    While we recommend entropy as the method to maximize test power, disagreement is a more interpretable statistic as it is correlates well with the portion of misclassified samples (see (right) \autoref{fig:disrates}).

    \smallbreak\noindent
    \textit{Comparison to baselines}.
    Amongst the baselines, there is no clear best method.
    On average, ensemble entropy is superior on CIFAR, MMD-D on Camelyon, and CTST on UCI. Our method may be thought of as a combination of ensembles, CTST, and H-Divergence.
    As ensembles, we leverage the variation in outputs between a set of classifiers; as CTST, we learn in a domain adversarial setting; and as H-Divergence, we compute a test statistic based on data that a model was trained on.
    Lastly, while MMD-D and H-Divergence were shown to be the previous state-of-the-art, their performance was validated only on larger sample sizes ($\geq$ 200).

    \smallbreak\noindent
    \textit{On Tabular Data}.
    The \method\ shows promise for deployment on tabular datasets (bottom right of\ \autoref{tab:results} and \autoref{fig:uci_plot}),
    where (1) the computational cost of training models is low, (2) the model agnostic nature of the \method\ is beneficial as random forests often outperform neural networks in tabular data~\citep{tabdatasurvey}, and
    (3) based on our discussions with medical professionals, the ability to detect covariate shift from small test sizes is of particular interest in the healthcare domain
    where population shift is a constant problem burden for maintaining the reliability of deployed models.

    \smallbreak\noindent
    \emph{On computational cost: } Our method is more computationally expensive than some existing methods for detecting shifts such as BBSD and Mahalanobis Scores, but is similar complexity to other approaches such as Ensembles, MMD-D and H-Divergence which may require training multiple deep models.
    However, as the \method\ leverages a pretrained model already in deployment, we find in practice that only a small number of training rounds are required to create each CDC.
    Furthermore, looking at the runtime behavior in \autoref{fig:runtime} we see that while allowing for more computation time increases the fidelity of the \method,
    only a small number of training batches may be required to achieve a desirable level of statistical significance.
    In scenarios where the deployed classifier is deemed high-risk (e.g.\ healthcare, justice system, education) we believe the additional computational expense is justified for an accurate assessment of whether the classifier needs updating.
    Having established the utility, accelerating the \method\;as well as building a deeper understanding of the runtime performance is fertile ground for future work.

    \begin{figure}[!htb]
        \centering
        \hspace*{2mm}\includegraphics[width=.9\linewidth]{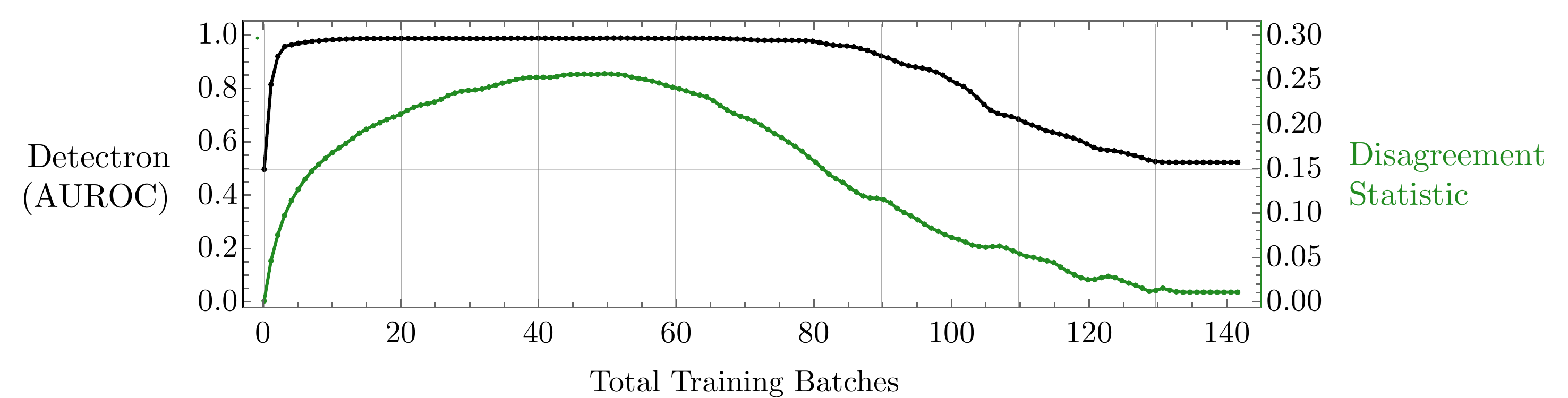}
        \caption{\small \textbf{Runtime Characteristics:} We train 100 random runs of CDCs on 100 samples from CIFAR 10 and 10.1 and compute the \textit{disagreement satistic} as the difference $\psi\coloneqq \mathbb{E}[\phi_\boldQ - \phi_\boldP]$.
        While we see that while $\psi$ peaks near 50 training batches, only 10 batches are required for the Detectron disagreement test to reach an area under the TPR vs FPR curve (AUROC) of nearly 1 (i.e., perfect discrimination).
        Training CDCs for too long eventually lowers $\psi$ as $\mathbb{E}[\phi_\boldQ] \approx \mathbb{E}[\phi_\boldP]\approx 1$ meaning CDCs eventually overfit to disagreeing on all of their data.}
        \label{fig:runtime}
    \end{figure}

    \vspace{-1mm}

    \section{Conclusion, Limitations and Future Work}
    \label{sec:conclusion}

    Our work presents a practical method capable of detecting covariate shifts given a pre-existing classifier.
    On both neural networks and random forests, we showcase the efficacy of our method to detect covariate shift using a small number of unlabelled examples across several real-world datasets.
    We remark on several characteristics that represent potential directions for future work:

    \smallbreak\noindent
    \emph{On Generalization and Model Complexity: } The definition of the Detectron specifies the use of the same function class for identifying shift as is used in the original prediction problem.
    The Detectron may fail to detect harmful shifts in cases where the base model is learned from an underspecified function class.
    \autoref{sec:complexity} provides additional context, examples and ways to mitigate this limitation.
    The precise relationships between model complexity, generalization error, and test power are interesting directions for future work.

    \smallbreak\noindent
    \emph{Beyond Classification: }Our work here focuses on the case of classification, however, we believe there is a viable extension of our work to regression models where constrained predictors are explicitly learned to maximize test error according to the existing metric, such as mean squared error.
    We leave this exploration for future work.

    \smallbreak\noindent
    Finally, we wish to highlight that while auditing systems such as the \method\ show promise to ease concerns when using learning systems in high-risk domains, practitioners interfacing with these systems should not place blind trust in their outputs.

    \section*{Ethics Statement}
    The speed of the adoption of ML into risk scenarios raises a critical need for methods that ensure trustworthiness and reliability. However, over-reliance on such methods brings about critical ethical considerations.
    As we have seen, Detectron is highly sensitive to picking out discriminative features of data distributions, and, as such, its usage may prevent practitioners from deploying models in new environments. As a result, the individuals in those environments may become subject to unfair treatment. For instance, if Detectron determines a model trained on hospital A safe to deploy in hospitals B and C, but not in D, the individuals in population D may experience a lower level of care. In a real example, Detectron, when tested on a model trained on a subset of light-skinned celebrities (CelebA dataset--\cite{liu2015faceattributes}), quickly raises the alarm when given images of those that are not light-skinned. While Detectron can help mitigate potential disasters encountered by deploying models in hazardous domains, it should not be an excuse for practitioners to avoid collecting richer and more diverse datasets as a primary strategy to ensure model reliability.

    \subsubsection*{Acknowledgments}
    Tom Ginsberg's research was supported by a New Frontiers in Research Fund NFRFR-2022-00526, an LG research grant,
    and a Canada Graduate Scholarship (CGS-M).
    Rahul G. Krishnan was supported by a CIFAR AI Chair.
    Resources used in preparing this research were provided, in part, by the Province of Ontario, the Government of Canada through CIFAR, and companies sponsoring the Vector Institute.
    Additional thanks is given to the many readers who provided valuable feedback: Vahid Balazadeh, Michael Cooper,  Edward De Brouwer, Aslesha
    Pokhrel, Adnan Mohd, Ian Shi, Asic Chen and Stephan Rabanser.

    \clearpage
    \bibliographystyle{abbrvnat}
    \bibliography{main.bib}

    \clearpage

    \appendix

    \part{Appendix}\label{part:appendix} 
    \parttoc 

    \clearpage
\section{Learning Algorithms}
\label{sec:learn-alg}
This section presents further details on the primary learning algorithms used and referred to in our work.

\subsection{Rejectron}
\label{subsec:rejectron}
We provide a summary of the original Rejectron algorithm for PQ learning~\citep{pqlearn} as it is the primary motivation for our work.
Rejectron (\autoref{alg:rejectron}) takes as input a labeled training set of $n$ samples $\mathbf{x}$ (iid over $\mathcal{P}$), an unlabeled test set of $n$ samples $\tilde{\mathbf{x}}$ (iid over $\mathcal{Q}$), an error $\epsilon$ and a weight $\Lambda$.
The output is a selective classifier, that predicts according to a base classifier $h$ if the input $x$ is inside some set $S$ and otherwise rejects (abstains from predicting).
\begin{equation}
    \left. h\right|_{S}(x) \coloneqq \begin{cases}
                                         h(x) & x\in S \\
                                         \text{reject} & x\not\in S
    \end{cases}
    \label{eq:selective_classifier}
\end{equation}
The error and rejection rate of a selective classifier are defined for a selective classifier $\left. h\right|_{S}$ with respect to a distribution $\mathcal{P}$ as:
\begin{equation}
    \text{rej}_{h}(S) \coloneqq \mathbb{P}_{x\sim\mathcal{P}}\left[x\not \in S\right]\quad \text{err}_h(S) \coloneqq \mathbb{P}_{x\sim\mathcal{P}}\left[h(x) \neq y\land x\in S\right]
\end{equation}
The \textit{empirical} rejection and error rates for a set of samples $\mathbf{x}=\{x_i\}_{i=1}^n$ and corresponding labels $\mathbf{y}=\{y_i\}_{i=1}^n$ are similarly defined as:
\begin{equation}
    \text{rej}_{h}(\mathbf{x}) \coloneqq \frac{1}{n}\sum_{i=1}^n \mathbbm{1}_{x_i\not \in S}\quad \text{err}_h(\mathbf{x}) \coloneqq \frac{1}{\sum_{i=1}^n x_i\in S} \sum_{i=1}^n \left(\mathbbm{1}_{x_i\neq y_i} \times \mathbbm{1}_{x_i\in S}\right)
\end{equation}
Under the selective classification framework \citeauthor{pqlearn} extend the conventional concept of PAC learning~\citep{Haussler90probablyapproximately} to test samples drawn from an unknown distribution $\mathcal{Q}$.

\textit{PQ learning~\citep{pqlearn}} Learner $L$ $(\epsilon, \delta, n)$-PQ-learns $F$ for $0\leq \epsilon \leq 1$, $0\leq \delta \leq 1$ and $n\in \mathbb{Z}^+$ if for any distributions $\Pc, \Qc$ over $X$ and any ground truth function $d \in F$, its output $h\coloneqq L(\boldP, d(\boldP), \boldQ)$ satisfies
    \begin{equation}
        \mathbb{P}_{{\mathbf{x} \sim \Pc^{n}},\ {\tilde{\mathbf{x}} \sim \Qc^{n}}}\left[\text{rej}_h(\mathbf{x})+\operatorname{err}_h(\tilde{\mathbf{x}}) \leq \epsilon\right] \geq 1-\delta
        \label{eq:pqlearn}
    \end{equation}

Under several assumptions and a special value $\epsilon^\star$, this selective classifier is guaranteed with high probability to have error less then $2 \epsilon^\star$ on $\tilde{\mathbf{x}}$
and a rejection rate below $\epsilon^\star$ on ${\mathbf{x}}$ (see Theorem 5.7 in~\citeauthor{pqlearn}).

\begin{algorithm}
    \caption{Rejectron~\citep{pqlearn}}
    \label{alg:rejectron}
    \KwIn{train $\mathbf{x} \in X^{n}$, labels $\mathbf{y} \in Y^{n}$, test $\tilde{\mathbf{x}} \in X^{n}$, error $\epsilon \in[0,1]$, weight $\Lambda=n+1$}
    \KwOut{selective classifier $\left. h\right|_{S}$}
    $h\gets \operatorname{ERM}(\mathbf{x}, \mathbf{y})\ $\\
    \For{$t=1,2,3, \ldots$}{
        1. $S_{t}\coloneqq \left\{x \in X: h(x)=c_{1}(x)=\ldots=c_{t-1}(x)\right\} \quad$ \\
        2. Choose $c_{t} \in C$ to maximize $s_{t}(c)\coloneqq\operatorname{err}_{\tilde{\mathbf{x}}}\left(\left.h\right|_{S_{t}}, c\right)-\Lambda \cdot \operatorname{err}_{\mathbf{x}}(h, c)$ over $c \in C$\\
        3. If $s_{t}\left(c_{t}\right) \leq \epsilon$, then stop and return $\left.h\right|_{S_{t}}$
    }
\end{algorithm}

Rejectron starts by querying an empirical risk minimization (ERM) oracle that uses a 0--1 risk score over a concept class $C$ for a model $h$ that perfectly learns the training dataset.
A primary assumption for Rejectron to output a perfect model as well as for it to eventually find a selective classifier that meets the $\epsilon^\star$ bound is that the true decision function
(i.e., the function that creates the training labels) is also a member of $C$.
The authors refer to this setting as \textit{realizable}.

On the first iteration of the algorithm, Rejectron finds another model $c_1\in C$ that jointly maximizes the error with respect to $h$ on $\tilde{\mathbf{x}}$ while minimizing it on ${\mathbf{x}}$.
The authors show that they can efficiently solve this optimization problem using a single ERM query on a dataset of $n^2+n$ samples (see Lemma 5.1 in~\citeauthor{pqlearn}).

In every subsequent step $t>1$ a set $S_t$ is created where all models $h$ through to $c_{t-1}$ agree.
Another model $c_t\in C$ is found that maximizes the same objective as above but only on the intersection of $\tilde{\mathbf{x}}$ and $S_t$.
Upon termination a selective classifier $\left. h\right|_{S_t}$ is output.

\subsection{Constrained Disagreement Classifiers}
We formally present the algorithm for creating a constrained disagreement classifier (\autoref{sec:detectron}), a fundamental tool used to detect distribution shift in our work.
The main inputs are a labeled training set, an unlabeled test set, and a classifier trained on $\boldP$ using a learning algorithm $L$.
Three other hyperparameters include a metric to evaluate the performance of a classifier on a labeled dataset (e.g., accuracy), a tolerance $\epsilon$ which controls how much the
metric may drop during disagreement steps and
and a maximum number of epochs.

\begin{algorithm}
    \DontPrintSemicolon
    \SetAlgoLined
    \SetNoFillComment
    \SetKwRepeat{Do}{do}{while}%
    \caption{Constrained Disagreement}\label{alg:constrained}
    \KwIn{$L$: learning algorithm, $\boldP_{\text{train}}$: labeled training dataset $\{\dots,(x_i,y_i),\dots\}$,
        $\boldP_{\text{val}}$: labeled validation dataset $\{\dots,(x_i,y_i),\dots\}$, $\boldQ$: unlabeled test dataset $\{\dots,{x}_i,\dots\}$,
        $f$: classifier trained on $\boldP$, $\mathcal{M}$: evaluation metric (default \textit{accuracy}), $\varepsilon$: tolerance (default 0.05),
        $k$: max epochs (default 10)}
    \KwOut{Constrained Disagreement Classifier $g_{(f,\boldP, \boldQ)}$}
    $\hat{\boldQ}\ \gets\ \{({x}, f({x}))\ |\ \hat{x}\in \boldQ\}$ \tcp*{infer pseudo labels on $\boldQ$ using $f$}
    \tcp*{create a batched dataloader using $\boldP \text{ and } \boldQ$}
    ${\boldP\boldQ}\ \gets\ $Batched($\{(x,y)\ |\ (x,y)\in \boldP \land (x,y)\in \boldQ$\})
    $g\ \gets\ f$\tcp*{Initialize $g$ with $f$}
    $m_0\ \gets\ \mathcal{M}(f, \boldP_{\text{val}})$ \tcp*{Compute the validation performance of $f$}
    \While{$\mathcal{M}(f, \boldP_{\text{val}}) > m_0 - \varepsilon$ and iterations $<\ k$}{
        \tcp*{Training epoch over $\hat{\boldP\boldQ}$}
        \For{batch $\ \in \boldP\boldQ$}{
            $x_P,\ y_P\ \gets\ \{(x, y)\ |\ (x,y)\in\text{ batch}\text{ and }(x,y)\in {\boldP}_{\text{train}}\}$\\
            $x_Q,\ y_Q\ \gets\ \{(x, y)\ |\ (x,y)\in\text{ batch}\text{ and }(x,y)\in \hat{\boldQ}\}$\\
            \tcp*{Update $g$ using an existing learning algorithm for $(x_P, y_P)$ and the appropriate disagreement update for $(x_Q, y_Q)$}
            $g\ \gets \text{Update}(g, L, (x_P, y_P), (x_Q, y_Q))$
        }
    }
    \Return{$g$}
\end{algorithm}

Our algorithm is a practical generalization of the inner step in Rejectron:\\
(1) We require tight performance monitoring on a validation set to prevent overfitting as we commonly observed that training models to disagree on small in-distribution test sets cause a drop in their in-distribution performance on unseen data after many training epochs. (2) We allow for arbitrary-sized train and test sets. (3) We provide a methodology (\autoref{sec:detectron} \textit{Learning to Disagree with the Disagreement Cross Entropy}) for updating arbitrary classification models to disagree on test data while agreeing on in-distribution data that leverages the generalization structure of their original learning algorithm while requiring quadractically fewer training samples then Rejectron.
Further details on implementation details for step (3) can be found below in \autoref{sec:cdc-learn}.

\section{Hypothesis Tests}
\label{sec:hyp}
\subsection{Two Sample Testing Methodology}
\label{subsec:test-meth}
Our method to detect covariate shift, like prior work, is to perform a statistical hypothesis test between the distributions of one or low dimensional quantities derived using each element of a possibly shifted target dataset $\boldQ$ and a known in-distribution source dataset $\Pun$ that has not been observed in during model development.
A significant motivation for our work was that the majority of statistical hypothesis tests used by prior work are formulated in a fashion that is independent of the particular target dataset being tested (e.g., BBSD~\citep{bbsd} which uses a pre-trained classifier as an ansatz for a dimensionality reduction).
However, with the \method\ we follow a transductive approach by building a statistical test by training classifiers to meet a carefully crafted objective (i.e., constrained disagreement) on the target data.
A drawback of this approach is that low dimensional representations are, in general, not be iid;
hence to perform a fair statistical test, we must run the \method\ on a known in-distribution dataset under the \textbf{same} experimental conditions (e.g., sample size, learning rate, ensemble size).
For other baseline methods that do not take the transductive approach (e.g., Mahalanobis, BBSD, Ensemble), we are not limited to choosing a source dataset $\Pun$ of the same size as the samples can again be assumed to be iid.
In practice, for iid methods, we fix the size of $\Pun$ to 1000 for CIFAR and Camelyon, and in UCI Heart Disease, we use only 120 as the dataset is significantly smaller (920 samples).

\subsection{Statistical Tests Used in Methods/Baselines}
\label{subsec:statistical-tests-used-in-methods}
We provide a summary in the context of our work on the three types of statistical tests used in our experiments.
We also explain technical details on how we use each test in our experiments.

\xhdr{Kolmogorov–Smirnov (KS) Test} The KS test is one of the most common non-parametric univariate statistical tests.
It the two sample setting given datasets $X=\{x_1,\dots,x_n\}$ and $Y=\{y_1, \dots,y_m\}$ the test statistic is computed as the maximum difference between the empirical CDFs of $X$ and $Y$.
An asymptotically correct $p$-value can is computed using a closed-form expression of the test statistic and sample sizes $n$ and $m$.
An exact $p$-value can also be found by considering the fraction of every possible pair of empirical CDFs that lie within the region with a maximum bounded difference;
more details can be found in~\cite{kstest}.
In practice we use the KS test implementation found in \verb!scipy.stats.ks_2samp! which automatically computes exact $p$-values when $\max \{n,m\} \leq 10,000$ and otherwise defaults to the asymptotic approximation.

We use KS tests for any distributions derived from continuous scores within our methods and baselines.
For the \textit{Relative Mahalanobis Score} test~\citep{relmahala}, we compute the $p$-value for shift via a KS test between the Mahalanobis score for the possibly shifted target data $\boldQ$ and a source dataset of unseen in distribution samples $\Pun$.
In \textit{BBSD}~\citep{bbsd} we compute a KS test on each dimension of the softmax output of a classifier between the source and target datasets, the final $p$-value is found via Bonferroni correction as is done by~\cite{failloud}, which simply takes the minimum $p$-value and divides it by the number of tests (e.g the softmax dimension).
Similarly, using \textit{Deep Ensemble (Entropy)} and \textit{\method\  (Entropy)}, we perform a KS test directly on the distribution of entropy values computed from each sample in the source and target datasets, respectively. See \autoref{fig:entropy_test} for a full description of the \method\ entropy test.

\xhdr{Binomial Test}
The binomial test is simple to state and has an elegant closed-form solution.
We consider a binomially distributed random variable with rate $q$ $X\sim \text{Binomial}(n, q)$ for which we observe a single sample $x$.
Since the binomial distribution is defined as a sum of iid Bernoulli random variables with the same rate, $x$ may equivalently be interpreted as a set of $n$ samples of which $x$ are $1$ and $n-x$ are $0$.
Given a a baseline rate $p$ we wish to determine the probability of observing an event at least as rare as $X=x$ under the null hypothesis that $p=q$, this quantity can be computed exactly using the symmetry of the binomial distribution.
\begin{align*}
    \mathbb{P}_{X\sim \text{Bin}(n, p)}(X \text{ is rarer then } x) &= 2\times \mathbb{P}_{X\sim \text{Bin}(n, p)}(X \geq x)\\
    &= 2 \sum_{k=x}^{n} \mathbb{P}[X=k] \\
    &= 2 \sum _{k=x}^n (1-p)^{n-k} p^k \binom{n}{k} = 2\frac{B_p(x,n-x+1)}{B(x,n-x+1)}
\end{align*}
Where $B_z(\alpha, \beta)$ is the incomplete Beta function and $B(\alpha, \beta)$ is the beta function.
Binomial testing is used in the \textit{Deep Ensemble (Disagreement)} baseline method where we estimate $p$ as the disagreement rate of a deep ensemble on the set $\Pun$ (i.e the number of samples in $\Pun$ where the ensemble does not predict unanimously divided by the size of $\Pun$) and test for distribution shift based on the result of a binomial test on the observed disagreement on $\mathcal{Q}$. Binomial testing is also used for the classifier two sample test method (CTST) \citep{paz2017revisiting}. First a domain classifier is trained to separate source and target data then its performance is tested on a set of unseen data where the number of samples of a total of $N$ it correctly assigns a domain label to is compared to the null distribution $\text{Bin}(N,0.5)$ (i.e. random guessing). 
For implementation purposes we use \verb!scipy.stats.binomtest!.

\xhdr{Permutation Test}
Our ultimate goal is to detect covariate shift $\mathcal{P}\neq \mathcal{Q}$ at a bounded significance level (i.e. bounded probability of outputting $\mathcal{P}\neq \mathcal{Q}$ when in fact $\mathcal{P}= \mathcal{Q}$).
To bound the significance level, we follow the simple and principled approach of the permutation test.
Suppose we wish to run the \method\ to test for shift on a set $\boldQ$ from a baseline $\Pun$ (each of $N$ samples) while \method, or any other test,
computes a $p$-value on some low dimensional samples derived from $\boldQ$ and $\Pun$, the significance threshold on that test will not in general correspond precisely to the significance for rejecting the original null hypothesis $\mathcal{P} = \mathcal{Q}$.
The permutation test allows us to reclaim statistical guarantees by first performing several tests where the null hypothesis holds (e.g., we draw $\boldQ$ from $\Pc$) and find a cutoff for the significance of a $p$-value that sets the false positive rate at exactly $5\%$.

Our experiments run the \method\ 100 times for each sample size on random sets $\boldQ$ drawn from $\Pc$.
Based on these runs we compute $95^{\text{th}}$ percentile ($\tau$) on the final rejection rate. We then run the actual test using a set of samples $\boldQ$ drawn from $\Qc$ which we deem significant at the $5\%$ level if the number of rejected samples 
is greater than $\tau$. A visual description of this method can be found in $\autoref{fig:detectron_permutation}$.

\begin{figure}[!htb]
    \centering
    \includegraphics[width=\linewidth]{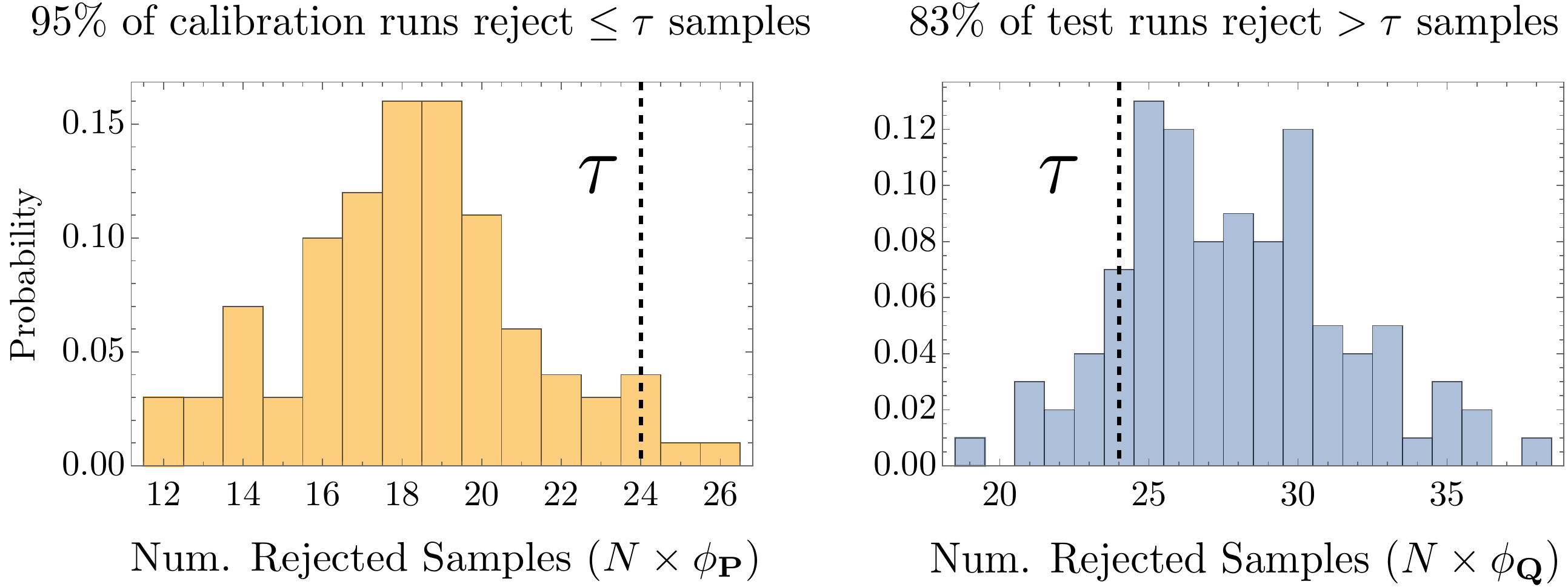}
    \caption{\textbf{The \method\ disagreement test:} In this example (taken from our experiment where $\Pc\ =\ $CIFAR10 and $\Qc\ =\ $CIFAR10.1 and sample size $N=50$) pictured we start by 
    training an ensemble of CDCs (we use and ensemble size of $5$) to reject/disagree on a set of $N$ unseen samples from the original training distribution ($\Pun$) while constrained to perform consistently with a base model on the original training and validation sets used to train the base model on CIFAR10. We perform 100 of these calibration runs using different random seeds and samples for $\Pun$ to estimate a threshold $\tau$ such that $95\%$ of the runs reject fewer then $\tau$ samples --- thereby fixing the significance level of the test to $5\%$. To estimate the test power, we train CDCs using \textbf{the exact same configuration} as the calibration runs except we replace $\Pun$ with a random set of $N$ samples $\boldQ$ from $\Qc$ (CIFAR 10.1). By averaging the number of runs the reject more than $\tau$ samples we can compute the power (or true positive rate) of the test for the configuration.}
    \label{fig:detectron_permutation}
\end{figure}

\begin{figure}[!htb]
    \centering
    \includegraphics[width=\linewidth]{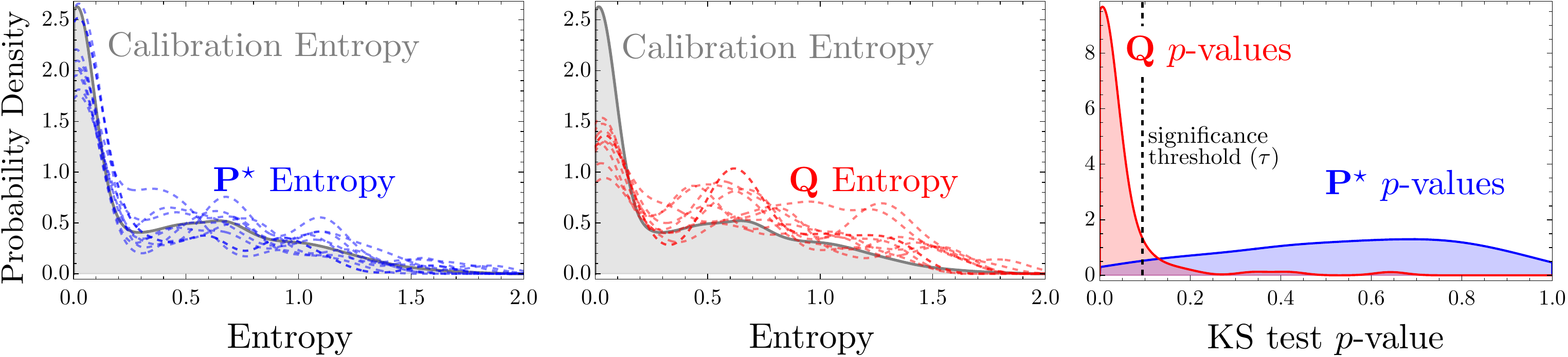}
    \caption{\textbf{The \method\ entropy test:} Following the same experimental setup as \autoref{fig:detectron_permutation}, we start (left) by computing a KS test between the continuous entropy values for each calibration run $\Pun$ with the flattened set of entropy values from all other 99 calibration runs. Then (center) we compute a KS test from each test run $\boldQ$ with a random set of all but one calibration runs. Finally (right), we find a threshold $\tau$ on the distribution of $p$-values obtained from step 1 as the $\alpha$ quantile to guarantee a false positive rate of $\alpha$. The power of the test is computed as the fraction of $p$-values computed from 100 test runs $\boldQ$ that are below $\tau$.}
    \label{fig:entropy_test}
\end{figure}

\clearpage
\section{Proofs}\label{sec:proofs}
The full proof of \autoref{th:dis} referenced in \autoref{sec:detectron} \textit{From Constrained Disagreement to Detecting Shift with Hypothesis Tests} is presented below.
After which, we prove a related result that invokes a Bayesian perspective on distribution shift to generate a tight bound on the probability of shift given observations made by CDCs on unseen data.
\begin{manualtheorem}{1}[Disagreement implies covariate shift]
    Let $f$ be a classifier trained on dataset $\boldP$ consisting of samples drawn identically from $\Pc$ and their corresponding labels.
    Let $g$ be a classifier that is observed to agree (classify identically) with $f$ on $\boldP$ but and disagree on a dataset $\boldQ$ drawn from $\Qc$.
    If the rate which $g$ disagrees with $f$ on $n$ unseen samples from $\Qc$ is greater then that from $n$ unseen samples from $\Pc$ w.p greater than $p^\star \coloneqq \frac{1}{2}\left(1- 4^{-n} \binom{2n}{n}\right)$ there must be covariate shift.
\end{manualtheorem}
\begin{proof}
    Let $R_P=\emptyset^{P}_1+ \ldots+ \emptyset^P_n$ where each $\emptyset^P_i$ is an i.i.d Bernoulli random variable that describes the probability of $g$ disagreeing with $f$ on an unseen sample from $\Pc$.
    Additionally, let $R_Q=\emptyset^Q_1+ \ldots+ \emptyset^Q_n$ be defined similarly for $\Qc$.
    If $\Pc$ and $\Qc$ are equal then $\emptyset^P_i$ and $\emptyset^Q_i$ are equal by definition and the probability of observing $R_Q >R_P$ is tightly bounded by \autoref{eq:bound}.
    This is the tightest upper bound that is not a function of $\mathbb{E}[\emptyset^P_i]$, the proof can be found in Lemma \autoref{lemma:bound}.

    \begin{equation}
        \mathcal{P=Q} \implies \mathbb{P}\left(R_Q > R_P \right) \leq \frac{1}{2} \left(1-4^{-n} \binom{2 n}{n}\right) = \frac{1}{2} - O\left(\frac{1}{\sqrt{n}}\right)
        \label{eq:bound}
    \end{equation}

    The more helpful contrapositive statement says that if it is sufficiently likely that $R_Q>R_P$, then covariate distributions $\Pc$ and $\Qc$ must not be equal.

    \begin{equation}
        \mathbb{P}\left(R_Q > R_P \right) > \frac{1}{2} \left(1-4^{-n} \binom{2 n}{n}\right) \implies \mathcal{P\neq Q}
        \label{eq:th1result}
    \end{equation}

    This result naturally lends itself to identifying $\mathcal{P\neq Q}$ by rejecting an exact statistical hypothesis that $R_Q=R_P$ in favor of the alternative $R_Q>R_P$.
\end{proof}

\begin{lemma}
    \label{lemma:bound}
    Let $X$ and $Y$ be iid binomial random variables with distribution $\text{Bin}(n,p)$ then for all $n\in \mathbb{Z}\geq 0$:
    \begin{equation}
        P(X>Y) \leq \frac{1}{2}\left(1-4^{-n} \binom{2n}{n} \right) < \frac{1}{2}
        \label{eq:boundresult}
    \end{equation}
    Furthermore, \cref{eq:boundresult} is the tightest possible bound that does not depend on $p$.
\end{lemma}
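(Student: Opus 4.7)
The plan is to reduce the two-sided statement to a one-sided bound via symmetry, then control the collision probability $P(X=Y)$ uniformly in $p$ using a Fourier representation.

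First, I would exploit exchangeability: since $X$ and $Y$ are iid, $P(X>Y)=P(Y>X)$, and the three events $\{X>Y\}$, $\{X<Y\}$, $\{X=Y\}$ partition the sample space. Hence
\begin{equation*}
P(X>Y)=\tfrac{1}{2}\bigl(1-P(X=Y)\bigr),
\end{equation*}
so the desired inequality is equivalent to the lower bound $P(X=Y)\geq 4^{-n}\binom{2n}{n}$ for every $p\in[0,1]$, with the tightness claim reducing to the existence of some $p$ at which equality holds.

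Next I would compute $P(X=Y)$ by writing $Z=X-Y=\sum_{i=1}^{n}(A_i-B_i)$ with $A_i,B_i\stackrel{\text{iid}}{\sim}\text{Ber}(p)$. A direct calculation of the characteristic function gives
\begin{equation*}
\mathbb{E}[e^{i\theta Z}]=\bigl(p^2+(1-p)^2+2p(1-p)\cos\theta\bigr)^n=\bigl(1-2p(1-p)(1-\cos\theta)\bigr)^n,
\end{equation*}
and Fourier inversion yields
\begin{equation*}
P(X=Y)=P(Z=0)=\frac{1}{2\pi}\int_{0}^{2\pi}\bigl(1-2p(1-p)(1-\cos\theta)\bigr)^n\,d\theta.
\end{equation*}

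The core inequality then follows from a \emph{pointwise} comparison under the integral: since $p(1-p)\leq\tfrac{1}{4}$ with equality at $p=\tfrac{1}{2}$ and $1-\cos\theta\geq 0$, the integrand is minimized at $p=\tfrac{1}{2}$, where it equals $\bigl((1+\cos\theta)/2\bigr)^n=\cos^{2n}(\theta/2)$. Monotonicity of the integral therefore gives $P(X=Y)\geq\tfrac{1}{2\pi}\int_0^{2\pi}\cos^{2n}(\theta/2)\,d\theta=4^{-n}\binom{2n}{n}$ via the standard Wallis identity. Combined with the symmetry reduction, this yields the claimed bound, and tightness is immediate because equality holds throughout at $p=\tfrac{1}{2}$ (equivalently, $\sum_{k}\binom{n}{k}^2=\binom{2n}{n}$ by Vandermonde). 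The main obstacle is identifying a representation of $P(X=Y)$ that turns a $p$-dependent optimization into a pointwise inequality under a single integrand; once the Fourier form is in hand, the bound on $p(1-p)$ and the evaluation at $p=\tfrac{1}{2}$ are routine.
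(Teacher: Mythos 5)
Your proposal is correct and follows essentially the same route as the paper's proof: reduce to lower-bounding $P(X=Y)$ by symmetry, write $P(Z=0)$ via the characteristic function $\bigl(1-2p(1-p)(1-\cos\theta)\bigr)^n$ and Fourier inversion, minimize the integrand pointwise at $p=\tfrac{1}{2}$, and evaluate the resulting integral to $4^{-n}\binom{2n}{n}$. The only (cosmetic) difference is that you evaluate the final integral by the Wallis identity rather than by the residue-theorem computation used in the paper, and your explicit observation that equality at $p=\tfrac{1}{2}$ gives tightness is a welcome clarification.
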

\begin{proof}
    Let $Z$ be the distribution $X-Y$, while $Z$ itself is intractable to write down for arbitrary $n$, the characteristic function takes a convenient form
    \begin{align}
        \phi_Z(t;p) &= \mathbb{E}\left[e^{i t (x - y)}\right] = \mathbb{E}\left[e^{i t x}\right] \mathbb{E}\left[e^{-i t y}\right]\\
        &=\left(1+p \left(-1+e^{i t}\right)\right)^n \left(1+p \left(-1+e^{-i t}\right)\right)^n\\
        &=\left(-p^2 e^{-i t}-p^2 e^{i t}+2 p^2+p e^{-i t}+p e^{i t}-2 p+1\right)^n\\
        &=(1-2 p+p \cos (t)-i p \sin (t)+p \cos (t)+i p \sin (t)\\
        &\quad +2 p^2-p^2 \cos (t)+i p^2 \sin (t)-p^2\cos (t)-i p^2 \sin (t))^n \nonumber \\
        &=\left(p^2 (2-2 \cos (t))+p (2 \cos (t)-2)+1\right)^n
    \end{align}
    Since $X$ and $Y$ are identically distributed $P(Z>0) = P(Z<0)$ and so
    \begin{equation}
        P(Z>0) = \frac{1}{2}\left(1 - P(Z=0)\right)
        \label{eq:pz}
    \end{equation}
    \Cref{eq:pz} suggests that a tight upper bound of the form $P(Z=0)\geq \alpha$ implies a tight lower bound in the form $P(Z> 0) \leq (1-\alpha)/2$.
    To bound $P(Z=0)$ we first write it as an integral expression using the characteristic inversion formula for discrete random variables~\citep{ushakov_2011}
    \begin{equation}
        P(Z=0) = \frac{1}{2 \pi}\int_{-\pi}^{\ \pi} \phi_Z(t;p)d t
        \label{eq:prob0}
    \end{equation}
    Since $\phi_Z(t;p)$ has the form $(a(t) p^2 + b(t) p + 1)^n$ where $a$ and $b$ are real valued functions and $a(t) \geq 0\ \forall t \in \mathbb{R}$ (i.e an integer power of a quadratic equation with positive leading coefficient), then for any choice of $t$, $\phi_Z(t;p)$  will be globally minimized if and only if $p\to p^{\star}=-b(t)/\left(2 a(t)\right)$.
    For the particular form of $\phi_Z(t; p)$, $p^\star$ is simply $1/2$
    \begin{equation}
        p^\star = -\frac{b(t)}{2 a(t)} = -\frac{2 \cos (t)-2}{2 (2-2 \cos (t))} = \frac{1}{2}
        \label{eq:pstar}
    \end{equation}
    This result is intuitive as the variance of a binomial distribution $\text{Bin}(n,p)$ is maximized for any fixed choice of $n$ when $p=1/2$.
    We should note that \autoref{eq:pstar} appears problematic when $\cos(t)=1$, but in this case $\phi(t;p)$ becomes constant, hence $p$ cannot influence the upper bound.
    We can now write the upper bound for $P(Z=0)$
    \begin{align}
        P(Z=0) &= \frac{1}{2\pi}\int_{-\pi}^{\ \pi} \phi_{Z}(t;p) dt\\
        &\geq \frac{1}{2\pi}\int_{-\pi}^{\ \pi} \phi_Z\left(t;\frac{1}{2}\right) dt\\
        &=  \frac{1}{2\pi}\int_{-\pi}^{\ \pi} 2^{-n} (\cos (t)+1)^n dt \\
        &= 4^{-n}\binom{2n}{n} \label{eq:17}
    \end{align}
    The final expression in \cref{eq:17} can be found using the change of variables $z = e^{i t}$ and Cauchy's residue theorem~\citep{complex}
    \begin{align}
        I &= \ \ \frac{1}{2\pi}\int_{-\pi}^{\ \pi} 2^{-n} (\cos (t)+1)^n dt\\
        & = -\frac{i}{2 \pi } \oint_{|z|=1} 2^{-n} \frac{1}{z}\left(1+\frac{1+z^2}{2 z}\right)^n \, dz\ (\text{using } t\to -i \log z)\\
        &=  -\frac{i}{2 \pi } \oint_{|z|=1} 4^{-n} z^{-n-1} (z+1)^{2 n} dz\ (\text{simplifying}) \\
        & = \text{Res}\left( 4^{-n} z^{-n-1} (z+1)^{2 n}, 0\right)\ \ (\text{applying Cauchy's Theorem})\\
        & = \frac{1}{4^n n!} \lim_{z\to 0}\left( \frac{d^n}{d z^n} (z+1)^{2 n}\right)\\
        & = \frac{1}{4^n n!} 2n (2n -1) (2n-2)\ldots (n + 1)\\
        & = 4^{-n} \binom{2n}{n}
    \end{align}

    Combining \cref{eq:pz} with the bound from \cref{eq:17} we arrive at the conjectured upperbound
    \begin{align}
        P(X > Y) = P(Z>0) &= \frac{1}{2}\left(1 - P(Z=0)\right)\\
        &\leq \frac{1}{2}\left(1-4^{-n} \binom{2n}{n} \right) \text{ by \cref{eq:17}}\\
    \end{align}
    Finally we may use Sterling's approximation to show that $4^{-n} \binom{2n}{n} \in O\left(\frac{1}{\sqrt{n}}\right)$ and hence converges to $0$ as $n\to \infty$ leaving a limiting tight upper bound of $1/2$.
\end{proof}

\begin{theorem}[Probability of Disagreement: A Bayesian Perspective]
    \label{th:bayes-shift}
    Let $f$ be a classifier trained on dataset $\boldP$ drawn from the distribution $\Pc$ over $X$ and their corresponding labels.
    Let $g$ be a classifier observed to agree with $f$ on $\boldP$ but disagree on a dataset $\boldQ$ drawn from a distribution $\Qc$ over $X$.
    We denote the true probabilities that $g$ will disagree with $f$ on a sample from $\Pc$ and $\Qc$ as $p$ and $q$, respectively.
    Under a uniform prior $\mathcal{U}(0,1)$ for $p$ and $q$,
    if we observe that $g$ disagrees with $f$ on $m$ out of $M$ iid samples from $\mathcal{Q}$ while disagreeing with $n$ out of $N$ iid samples from $\mathcal{P}$, then
    the posterior probability that $g$ is truly more likely to disagree with $f$ on $\Qc$ compared to $\Pc$:
    \begin{align}
        \mathbb{P}[q > p] = &1 -\frac{(M+1)! (N+1)! (m+n+1)!}{(m+1)! n! (M-m)! (m+N+2)!} \times \label{eq:hyper}\\
        &\quad\quad\quad  _3F_2(m+1,m-M,m+n+2;m+2,m+N+3;1) \nonumber
    \end{align}
    Where $_p{F}_q$ is the generalized hypergeometric function, implemented in several standard mathematical libraries.
\end{theorem}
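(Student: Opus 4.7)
The plan is to combine Bayesian conjugacy with the Euler integral representation of the generalized hypergeometric function $_3F_2$. Since the observations on $\mathcal{P}$ and $\mathcal{Q}$ are independent and both priors are uniform, Beta--Binomial conjugacy gives independent posteriors $p\mid\mathrm{data}\sim\mathrm{Beta}(n+1,N-n+1)$ and $q\mid\mathrm{data}\sim\mathrm{Beta}(m+1,M-m+1)$. I would target the complementary event $\mathbb{P}[q\leq p]$, which by conditioning on $p$ and using the Beta CDF can be written as
\begin{equation*}
\mathbb{P}[q\leq p] \;=\; \int_0^1 f_p(p)\, I_p(m+1,M-m+1)\, dp,
\end{equation*}
where $f_p$ is the posterior density of $p$ and $I_p(\cdot,\cdot)$ denotes the regularized incomplete beta function.

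I would then invoke the Gauss hypergeometric form of the incomplete beta function, $B(p;a,b) = \tfrac{p^a}{a}\,{}_2F_1(a,1-b;a+1;p)$, to convert the integrand into a product $p^{n+m+1}(1-p)^{N-n}\,{}_2F_1(m+1,m-M;m+2;p)$ multiplied by the constants collected from the two Beta normalizers together with the factor $1/(m+1)$. The remaining definite integral is exactly the Euler integral representation of $_3F_2$,
\begin{equation*}
\int_0^1 t^{c-1}(1-t)^{e-c-1}\,{}_2F_1(a,b;d;t)\,dt \;=\; \frac{\Gamma(c)\Gamma(e-c)}{\Gamma(e)}\,{}_3F_2(a,b,c;d,e;1),
\end{equation*}
so matching $a=m+1,\ b=m-M,\ c=m+n+2,\ d=m+2,\ e=m+N+3$ produces precisely the $_3F_2(1)$ that appears in the theorem.

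All remaining gamma functions carry non-negative integer arguments, so converting them to factorials and collecting should yield exactly the advertised prefactor $\frac{(M+1)!(N+1)!(m+n+1)!}{(m+1)!\,n!\,(M-m)!\,(m+N+2)!}$; subtracting from $1$ then completes the identity. The hard part will be the parameter matching in the Euler-integral step and the subsequent factorial bookkeeping---the other two ingredients (Beta--Binomial conjugacy and the integral representation of $_3F_2$) are applied essentially verbatim. A reassuring sanity check is that $b = m-M$ is a non-positive integer whenever $m\leq M$, so the $_3F_2$ series terminates after finitely many terms, making convergence at argument $1$ automatic and the resulting expression well defined.
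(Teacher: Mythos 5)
Your proposal is correct and reproduces the theorem's formula exactly. The Bayesian setup is the same as the paper's: both arguments use the uniform prior and Beta--Binomial conjugacy to obtain independent posteriors $\mathrm{Beta}(n+1,N-n+1)$ and $\mathrm{Beta}(m+1,M-m+1)$, and both reduce $\mathbb{P}[q>p]$ to a double (equivalently, nested) Beta integral. Where you genuinely diverge is in the evaluation of that integral: the paper simply hands it to the Wolfram Mathematica cloud and cites the symbolic output, whereas you evaluate it analytically by writing the inner integral as the regularized incomplete beta function, invoking the identity $B(x;a,b)=\tfrac{x^{a}}{a}\,{}_2F_1(a,1-b;a+1;x)$, and then applying the Euler integral representation of ${}_3F_2$. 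I checked the parameter matching: with $c=m+n+2$ and $e=m+N+3$ the ${}_3F_2(m+1,m-M,m+n+2;m+2,m+N+3;1)$ comes out exactly as stated, and after cancelling $(N-n)!$ and absorbing $1/(m+1)$ into $(m+1)!$ the Gamma-to-factorial bookkeeping yields precisely the advertised prefactor. Your route buys a self-contained, human-verifiable derivation of the closed form (plus the useful observation that $b=m-M\le 0$ makes the series terminate, so convergence at unit argument is automatic and the applicability conditions $c>0$, $e-c=N-n+1>0$ of the Euler representation are satisfied); the paper's route buys brevity at the cost of relying on an external computer-algebra computation. Your version is the stronger proof of the identity itself.
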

\begin{proof}
    For simplicity we consider the function $\text{dis}: X\to \{0,1\}$ that outputs $0$ if $f(x)=g(x)$ else $1$.
    We define the true disagreement rates $\textbf{p}$ and $\textbf{q}$ as
    \begin{equation}
        \mathbf{p} \coloneqq \mathbb{E}_{x\sim \mathcal{P}}[\text{dis}(x)]\text{ and } \mathbf{q} \coloneqq \mathbb{E}_{x\sim \mathcal{Q}}[\text{dis}(x)]
        \label{eq:defspq}
    \end{equation}
    Without any \textit{a-priori} knowledge of $\text{dis}$ we define the random variables $p$ and $q$ under uniform prior (i.e $p,q\overset{\text{i.i.d}}{\sim}\mathcal{U}(0,1)$) to encode our belief over the true values $\mathbf{p}$ and $\mathbf{q}$.
    Now we draw $N$ samples as $\mathbf{x} \overset{\text{i.i.d}}{\sim} \mathcal{P}^N$ and $M$ as $\tilde{\mathbf{x}} \overset{\text{i.i.d}}{\sim} \mathcal{Q}^M$ and compute the number of times $\text{dis}(x)$ equals $1$ on each set.
    \begin{equation}
    {n}
        \coloneqq \sum_{x\in \mathbf{x} } \text{dis}(x) \text{ and } {m}\coloneqq \sum_{x\in \tilde{\mathbf{x}} } \text{dis}(x)
        \label{eq:bayes_result}
    \end{equation}
    By definition in \autoref{eq:defspq} we know that $n$ and $m$ are draws from binomial distributions: $\mathcal{N}\sim \text{Bin}(N,p)$ and $\mathcal{M} \sim \text{Bin}(M,q)$ respectively.
    We can then compute the posterior probability density functions of $p$ and $q$ conditioned on the observations $\mathcal{N}=n$ and $\mathcal{M}=m$ using exact Bayesian inference.
    \begin{align}
        f_{p| {n}}(x)&\coloneqq \mathbb{P}[p = x | \mathcal{N}={n}] \\
        &= \mathbb{P}[\mathcal{N} = {n} | p = x] \underbrace{\mathbb{P}[p = x]}_{=1} \left(\int_{0}^1 \mathbb{P}[\mathcal{N}={n}|p=x] dx\right)^{-1}\\
        &={\binom{N}{n}} x^{n} (1-x)^{N-{n}}  \left(\int_0^1 {\binom{N}{n}} x^{n} (1-x)^{N-{n}} dx \right)^{-1}\label{eq:binointegral} \\
        &= x^{n} (1-x)^{N-{n}}  \left(\underbrace{B_x(n+1,-n+N+1)}_{\text{incomplete beta function}}\left. \right\rvert_{x=0}^{x=1}\right)^{-1} \\
        &=x^{n} (1-x)^{N-{n}}\left(\underbrace{\frac{n! (N-n)!}{(N+1) N!}}_{x=1} - \underbrace{0}_{x=0}\right)^{-1}\\
        &= \binom{N}{n} x^{n} (1-x)^{N-{n}} (N+1)
    \end{align}
    The integration in \autoref{eq:binointegral} is solved using the definition of the incomplete beta function.

    Without loss of generality we may also find $f_{q| m}$
    \begin{equation}
        f_{q| m}(x) = (M+1) \binom{M}{m} x^{m} (1-x)^{M-{m}}
        \label{eq:cond_pmf}
    \end{equation}
    Given these closed form posterior distributions for $p$ and $q$ we may compute the probability that the true value of $q$ is greater then $p$
    \begin{align}
        &\mathbb{P}[q > p | \mathcal{N}=n, \mathcal{M}=m] = \int_{y>x} f_{q|m}(y)f_{p|n}(x) dy dx\\
        &= \int_0^1\int_{x}^1 f_{q|m}(y)f_{p|n}(x) dy dx\\
        & = \binom{M}{m} \binom{N}{n}  (1+M) (1+N) \int_0^1 (1-x)^{-n+N} x^n \int_x^1 (1-y)^{-m+M} y^m dydx
    \end{align}
    This integral, while daunting, can easily be solved in closed form using the free online Wolfram Mathematica cloud (\href{https://www.wolframcloud.com/obj/87815d61-488b-4741-b62e-398aca5d8dae}{result link}).
    The solution is exactly \autoref{eq:hyper}.
    \begin{align}
        &\mathbb{P}[q > p | \mathcal{N}=n, \mathcal{M}=m] =1 -\frac{(M+1)! (N+1)! (m+n+1)!}{(m+1)! n! (M-m)! (m+N+2)!} \times \label{eq:th2-res}\\
        &  \quad\quad_3F_2(m+1,m-M,m+n+2;m+2,m+N+3;1) \nonumber
    \end{align}
    To gain intuition, a graphical representation of \autoref{eq:th2-res} is provided in \autoref{fig:bayes_dis}.
    From a practical standpoint, if a practitioner trains two classifiers $f$ and $g$ that appear to disagree more often on a new dataset than a baseline rate computed on an in-domain test set, they can decide to act on that observation.
    (e.g., collected more training data) at a particular belief threshold (e.g., probability greater than 80\%).
    Furthermore, there is a natural link between \autoref{eq:th2-res} and covariate shift as exact knowledge of $\mathbf{q} > \mathbf{p}$ by definition implies not only covariate shift $\Pc \neq \Qc$,
    but a type that is harmful by our original definition in \autoref{sec:detectron}.
    Therefore knowing the probability that $q>p$ is a useful measure of how likely, without any additional assumptions, that we are experiencing a harmful covariate shift.

    \begin{figure}
        \centering
        \includegraphics[width=\linewidth]{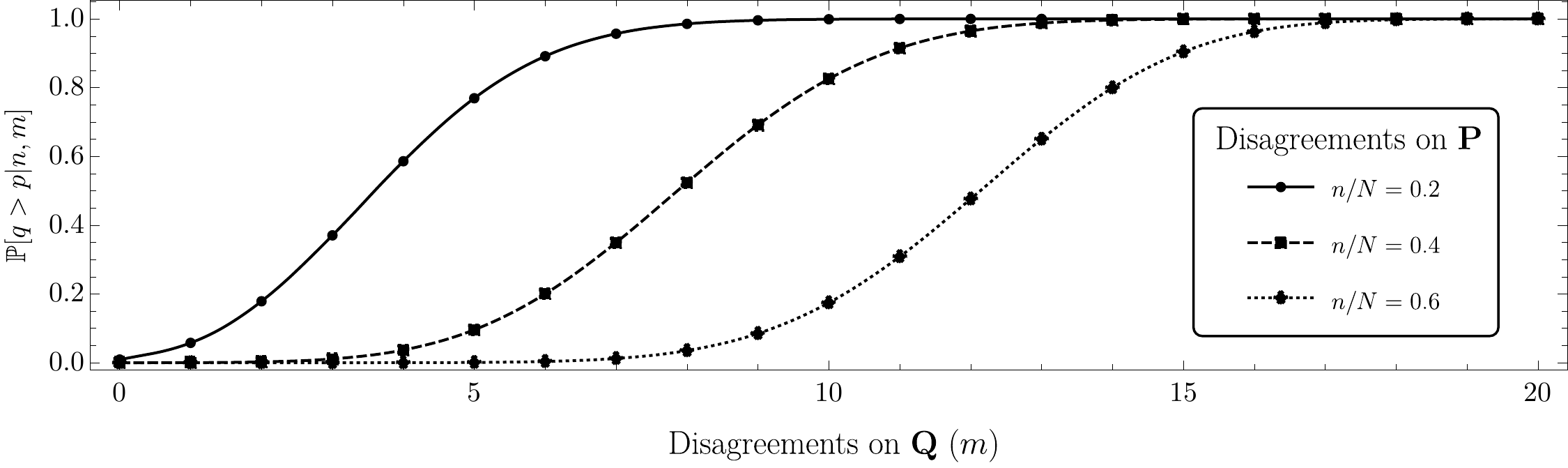}
        \caption{Belief that the probability $q$ that two classifiers disagree on samples from $\Qc$ is greater then the probability $p$ that they disagree on $\Pc$ given an observation of $m$ disagreements out of $M$ samples from $\Qc$ and $n$ of $N$ disagreements on $\Pc$.
        We plot this probability for $M=20$ and $N=10000$. We observe that even for a small test set size, $M=20$, we can strongly believe that there is a true difference when $m/M$ is only slightly larger then $n/N$.}
        \label{fig:bayes_dis}
    \end{figure}

\end{proof}

\section{Constrained Disagreement Learning}
\label{sec:cdc-learn}

We elaborate on the technical details of the objective functions required for training constrained disagreement classifiers.

\subsection{Disagreement Cross Entropy}\label{subsec:disagreement-cross-entropy}
\xhdr{Intuition}
In our methodology, we propose the \textit{disagreement cross entropy} (DCE) as a simple loss function that encourages a classifier to disagree with a target label while otherwise outputting a high entropy prediction.
DCE is equivalent to simply flipping the target label and computing the regular cross-entropy in the binary classification case.

Consider a model that outputs a distribution $\{p_1, p_2, 1-p_1-p_2\}$ over $3$ classes, suppose we would like to train it to \textbf{not} output high probability for class $3$ i.e minimize $p_3=1-p_1-p_2$.
An intuitive approach would be to \textit{maximize} the standard cross entropy loss $-\log(1-p_1-p_2)$ that one would equivalently \textit{minimize} if trying to output high probability for class $3$.
We show in \autoref{fig:dce} that this objective is unstable whereas the DCE $1/2(\log(p_1) + \log(p_2))$ is convex, and takes on values in a similar range to the regular cross entropy.
\begin{figure}[!htb]
    \centering
    \includegraphics[width=\linewidth]{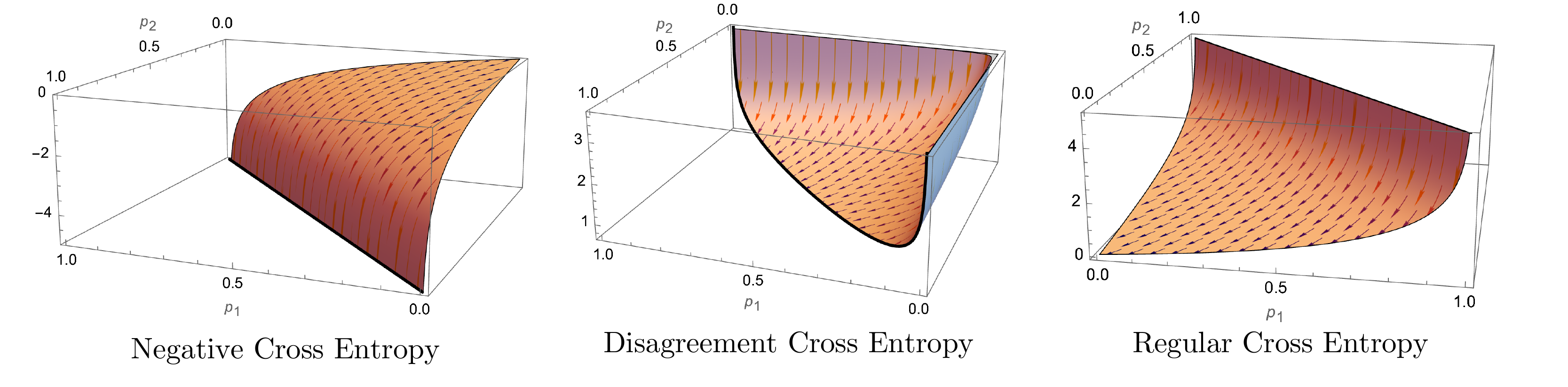}
    \caption{Consider a classifier outputs a distribution $\{p_1, p_2, 1-p_1-p_2\}$ over $3$ classes. We compare the optimization landscape induced by either (left) minimizing the negative cross entropy for the target $p_3=1-p_1-p_2$ (e.g trying not to predict class $3$) (center) minimizing our \textit{disagreement cross entropy} (DCE) for the same target
    and (right) minimizing the regular cross entropy (e.g trying to predict class $3$). We observe that naively minimizing the negative cross entropy results in an unbounded local minima, while the DCE is significantly more stable and scales similarly to the regular cross entropy. Gradients are overlaid to help better visualize the 3D geometry.}
    \label{fig:dce}
\end{figure}

\xhdr{Loss Function}
Let $\hat{y}$ denote a discrete distribution over $N$ classes and $t$ a target class $t\in\{1,...,N\}$.
We define the disagreement cross entropy $\hat{\ell}$ as the cross entropy of $\hat{y}$ with the uniform distribution over all classes except $t$.
\begin{equation}
    \hat{\ell}(\hat{y},t) \coloneqq \frac{1}{1-N}\sum_{i=1}^N \mathbbm{1}_{i\neq t}\log(\hat{y}_i)
    \label{eq:flippedloss}
\end{equation}
This expression has the effect of minimizing the predicted probability of the target class $\hat{y}_t$ while maximizing the overall entropy of the prediction.
We note that if $N=2$ as in binary classification, then \autoref{eq:flippedloss} falls back to simple label flipping.
Furthermore if we let $\hat{y}=\text{Softmax}(l)$ for some real valued $N$ dimensional logit vector $l$
\begin{equation}
    \hat{\ell}(l, t) = \frac{1}{N-1}\left(\sum_{i=1}^N \mathbbm{1}_{i\neq t} l_i\right) + \log\left( \sum_{i=1}^N \exp(l_i)\right)
    \label{eq:cdc_loss_eq}
\end{equation}

When learning constrained disagreement classifiers using DCE, we minimize the loss function in \autoref{eq:loss}, which applies the regular cross-entropy to the in-distribution samples $\boldP$ and the DCE to the possibly shifted target samples $\boldQ$.
Losses are combined with a weighted sum whose weight should be set to $1/(|\boldQ|+1)$ as discussed in \autoref{sec:detectron} (\textit{Choosing $\lambda$}).

\xhdr{Validity of the DCE Measure}
We introduce a dentition of a \textit{a valid disagreement loss function} and show that the DCE loss satisfies it.
\begin{definition}[A Valid Disagreement Loss Function]
    Let $P$ be the set of all probability vectors of length $N$ whose maximum index is a fixed target label $y\in\{1,\ldots, N\}$ . 
    Similarly, let $Q$ be the set of all probability vectors whose maximum index is \textbf{not} uniquely equal to $y$. 
    For instance, in the three-dimensional case with $y=3$
    \begin{align*}
        &P=\{(p_1,p_2,1-p_1-p_2) \text{ s.t. } |\ {\max(p_1,p_2) < 1-p_1-p_2}\}\\
        &Q=\{(p_1,p_2,1-p_1-p_2) \text{ s.t. } |\ {\max(p_1,p_2) \geq 1-p_1-p_2}\}\\
        &\text{and } 0\leq p_1,p_2\leq 1\land p_1+p_2\leq 1
    \end{align*}
    A score $S(\mathbf{p}; y)$ is proper for disagreement if 
    \begin{equation}
        \forall\ \mathbf{p}\in P\ \min_{\mathbf{q}\in Q} S(\mathbf{q}; y) \leq S(\mathbf{p}; y)
    \end{equation}
    which is to say that for every probability vector in $P$ there exists a probability vector in $Q$ that achieves a score at least as low.
\end{definition}

\begin{theorem}
    The Disagreement Cross Entropy Loss is a valid disagreement loss function.
\end{theorem}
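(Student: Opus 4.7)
The plan is to show validity by an explicit construction: for any $\mathbf{p} \in P$, I will exhibit a single $\mathbf{q} \in Q$ that achieves a strictly lower DCE, which is more than enough to establish the minimum inequality required by the definition. The construction that suggests itself is a pairwise swap of probability mass between the target coordinate and the runner-up coordinate. Concretely, given $\mathbf{p} \in P$ with target $y$, set $j \coloneqq \arg\max_{i \neq y} p_i$ (breaking ties arbitrarily) and define $\mathbf{q}$ by $q_y \coloneqq p_j$, $q_j \coloneqq p_y$, and $q_i \coloneqq p_i$ for all other $i$. This $\mathbf{q}$ is a valid probability vector because it is a permutation of the entries of $\mathbf{p}$.

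The next step is to verify $\mathbf{q} \in Q$, i.e., that $y$ is no longer the unique argmax. Since $\mathbf{p} \in P$, we have $p_y > p_i$ for every $i \neq y$, and in particular $p_y > p_j$. After the swap, $q_j = p_y$ exceeds both $q_y = p_j$ and $q_i = p_i$ for $i \neq y,j$ (the latter because $j$ was the runner-up in $\mathbf{p}$), so the argmax of $\mathbf{q}$ is $j \neq y$. Hence $\mathbf{q}$ indeed lies in $Q$.

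Finally, I would compute the DCE difference directly. Since $\mathbf{q}$ and $\mathbf{p}$ agree on all coordinates except $y$ and $j$, and since the sum defining $\hat{\ell}(\cdot; y)$ omits the $y$-th coordinate entirely, every term except the $j$-th cancels, leaving
\begin{equation}
\hat{\ell}(\mathbf{q}; y) - \hat{\ell}(\mathbf{p}; y) \;=\; \frac{1}{1-N}\bigl(\log q_j - \log p_j\bigr) \;=\; \frac{1}{1-N}\bigl(\log p_y - \log p_j\bigr).
\end{equation}
For $N \geq 2$ the prefactor $1/(1-N)$ is negative, and $\log p_y - \log p_j > 0$ since $p_y > p_j$, so the difference is strictly negative. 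Therefore $\hat{\ell}(\mathbf{q}; y) < \hat{\ell}(\mathbf{p}; y)$, and taking the infimum over $Q$ preserves this inequality, which is exactly the condition for a valid disagreement loss.

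I do not expect a real obstacle here: the proof is essentially a one-line swap-and-compare argument, and the only subtlety is being careful that the definitions of $P$ and $Q$ use strict versus non-strict inequalities so that the constructed $\mathbf{q}$ actually falls in $Q$. A minor edge case worth a sentence is $N=2$, where the swap reduces to label flipping and the DCE collapses to ordinary cross entropy with the flipped target, making the claim immediate.
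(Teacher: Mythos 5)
Your proof is correct, but it takes a genuinely different route from the paper's. The paper computes the two extremal values explicitly: the minimum of the DCE over $Q$ is $\log(N-1)$, attained at the uniform distribution over the non-target classes, while the infimum over $P$ is $\log(N)$ (not attained), and concludes from $\log(N-1) < \log(N)$. You instead give a local, constructive argument: swap the mass at the target coordinate with the runner-up coordinate, observe the result lands in $Q$, and note that the single changed term in the sum moves the loss strictly downward because $1/(1-N) < 0$ and $p_y > p_j$. Your argument is more elementary --- it avoids the constrained optimization over $P$ and the limiting $\varepsilon$ bookkeeping the paper needs to identify the infimum --- and it proves the slightly stronger pointwise statement that \emph{every} $\mathbf{p}\in P$ is strictly beaten by some explicit $\mathbf{q}\in Q$; the paper's approach buys the exact values of the two minima, which is useful context for understanding the scale of the loss. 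One edge case you should dispatch explicitly: if $p_i = 0$ for some $i\neq y$ (e.g.\ $\mathbf{p}$ one-hot at $y$ with $N\geq 3$), then $\hat{\ell}(\mathbf{p};y) = +\infty$ and your constructed $\mathbf{q}$ may also have infinite loss, so the strict inequality degrades; the required non-strict inequality still holds trivially there (and the uniform-over-non-targets vector in $Q$ has finite loss $\log(N-1)$), but the claim of a \emph{strictly} lower witness needs $p_j > 0$, or a one-line fallback for the degenerate case.
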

\begin{proof}
    We show that the minimum DCE for a probability vector $\mathbf{q}\in Q$ is $\log(N-1)$ while the minimum DCE for $\mathbf{p}\in P$ is $\log(N)$.
    First note that by definition of the DCE in \autoref{eq:flippedloss} the unique probability vector $\mathbf{v}$ that globally minimizes it with respect to a target class $y$ is 
    the vector with elements $1/(N-1)$ for all indices $\mathbf{v}_i$ where $i\neq y$ and $\mathbf{v}_y=0$. $\mathbf{v}$ is a member of $Q$ because it does not have a unique maximum at position $y$.
    Computing the DCE of $\mathbf{v}$ with respect to the target $y$ gives:
    \begin{align}
        \text{DCE}(\mathbf{v}, y) = \frac{1}{1-N}\sum_{i=1}^N \log(\mathbf{v}_i) \mathbbm{1}_{i\neq y} &= \frac{1}{1-N} \log\left(\frac{1}{N-1}\right) \times (N-1)\\
        &=\log(N-1)
    \end{align}
    Next note that since $\mathbf{p}_y$ does not contribute to minimizing the DCE which is in the form $\sum_{p\in \mathbf{p}\setminus \mathbf{p}_y} \log(p)$ the minimal solution must minimize the term $\mathbf{p}_y$. However to enforce $\mathbf{p}\in P$ we must have that $\forall\ i\in\{1,\ldots,N\}\setminus y,\ \mathbf{p}_y > \mathbf{p}_i$. 
    Hence minimizing $\mathbf{p}_y$ while making sure $\mathbf{p}\in P$ enforces $\mathbf{p}_y > 1/N$ for some $\epsilon > 0$ and all other $p_i < 1/N$ since if $\mathbf{p}_y \leq 1/N$ we could also chose some $p_i \geq 1/N$ and produce a vector $\mathbf{p}\in Q$. Without loss of generality choosing $\mathbf{p}_y = 1/N +\varepsilon$ for some small $\varepsilon > 0$ we must chose all other $\mathbf{p}_i= 1/N - \epsilon_i$ s.t. all $\epsilon_i > 0$ and $\sum_{i\in \{1,\ldots, N\}\setminus y} \epsilon_i = \varepsilon$. This gives a DCE of:
    \begin{align}
        &\text{DCE}(\mathbf{p}, y) = \frac{1}{1-N}\sum_{i=1}^N \log(\mathbf{p}_i) \mathbbm{1}_{i\neq y} = \frac{1}{1-N} \sum_{i\in \{1,\ldots, N\}\setminus y} \log\left(\frac{1}{N}-\epsilon_i\right)\\
        &\text{ where } \lim_{\varepsilon\to 0 }\frac{1}{1-N} \sum_{i\in \{1,\ldots, N\}\setminus y} \log\left(\frac{1}{N}-\epsilon_i\right) = \log(N)
    \end{align}
    For all vectors $\mathbf{p}\in P$ we cannot achieve a DCE of less than $\log(N)$ but the minimum DCE in $Q$ is $\log(N-1)$ which is less than $\log(N)$, hence proving that the DCE is a suitable scoring rule for disagreement.
\end{proof}

\xhdr{Implementation}
We provide a simple PyTorch style implementation for batched computation of the CDC objective in \autoref{eq:loss} given batch of logits and targets.
We assume \verb!logits! is a floating point vector of $(\text{batch\_size}\times\ N)$ logit values, \verb!targets! is a integer vector $(\text{batch\_size})$ of target classes, \verb!mask! is a 0-1 mask of $(\text{batch\_size})$ that is 1 at index $i$
if \verb!logits[i]! is $\boldP$ and 0 otherwise.
\begin{python}
import torch, import torch.nn.functional as F
def cdc_loss(logits, targets, mask, weight=1/(size of test set + 1)):
    # select data in p
    logits_p, targets_p = logits[mask], targets[mask]
    # select data in Q
    logits_q, targets_q = logits[1-mask], targets[1-mask]
    # cross entropy on P with targets
    loss_p = F.cross_entropy(logits_p, targets_p, reduction=None)
    # DCE on Q
    zero_hot = 1 - F.one_hot(targets_q, num_classes=N)
    loss_q = (logits_q * zero_hot).sum(dim=1) / (1 - N)
        + torch.logsumexp(logits_q)
    # Weighted mean
    return torch.cat([loss_p, weight * loss_q]).mean()
\end{python}

\subsection{Extension To Discrete Models}
\label{subsec:extension}
When training models with arbitrary discrete or non-differentiable parameters concerning their objective (e.g., random forest), we must find a more general solution for creating CDCs. Such a solution should (1) reduce to the DCE when the model is, in fact, continuous and trained using the standard cross-entropy, and (2) reduces to label flipping when $N=2$ (binary classification).
Our simple solution is to replicate every sample in $\boldQ$ exactly $N-1$ times and create a unique label for each
from the set $\mathcal{S}\coloneqq \{1,...,N\}\backslash \{t\}$ where $t$ is the disagreement target. We also each a sample a weight of $1/(N-1)$.
In the case of $N=2$, this corresponds to no replication and simply assigning the opposite label.
In the case where the model learns by cross-entropy, it  equals \autoref{eq:loss}.
\begin{proof}
We prove this statement starting with the definition of the cross entropy
\begin{equation}
    \text{CE}(\hat{y}, y) = \sum_{c=1}^N \mathbbm{1}_{c=y} \log(\hat{y}_c)
\end{equation}
Now we consider the sum of the cross entropy for each label in $\mathcal{S}$:
\begin{align}
    \sum_{y\in \mathcal{S}} \text{CE}(\hat{y}, y) &= \sum_{y\in \mathcal{S}} \sum_{c=1}^{N}\mathbbm{1}_{c=y}\log(\hat{y}_c) \\
    &=\sum_{c=1}^{N}\mathbbm{1}_{c\neq t}\log(\hat{y}_c)\\
    &= (N-1) \text{DCE}(\hat{y}, y)
\end{align}
Hence when giving each sample a weight of $(N-1)^{-1}$ we recover the exact form of DCE.
\end{proof}

\subsection{Disagreement with Overparameterized Models}
Many existing tests for covariate shift that rely on overparameterized models do not perform well in the small sample regime due to catastrophic overfitting. 
In this section, we explain how this phenomenon, if ignored, can also be catastrophic for the \method\ but is easily fixable with a simple early stopping technique.
We recall the main hypothesis on which the \method\ is built:
\begin{displayquote}
    \textit{Given a base classifier $f\in F$ that is well fit to a training dataset $\mathbf{P}$, it is easier to learn another classifier $g\in F$ that disagrees with $f$ on unlabeled data $\mathbf{Q}$, but agrees with $f$ on $\mathbf{P}$ when the distribution of $\mathbf{Q}$ is far from that of $\mathbf{P}$.}
\end{displayquote}
\begin{wrapfigure}{r}{0.5\textwidth}
  \vspace{-6mm}
      \begin{center}
       \includegraphics[width=\linewidth]{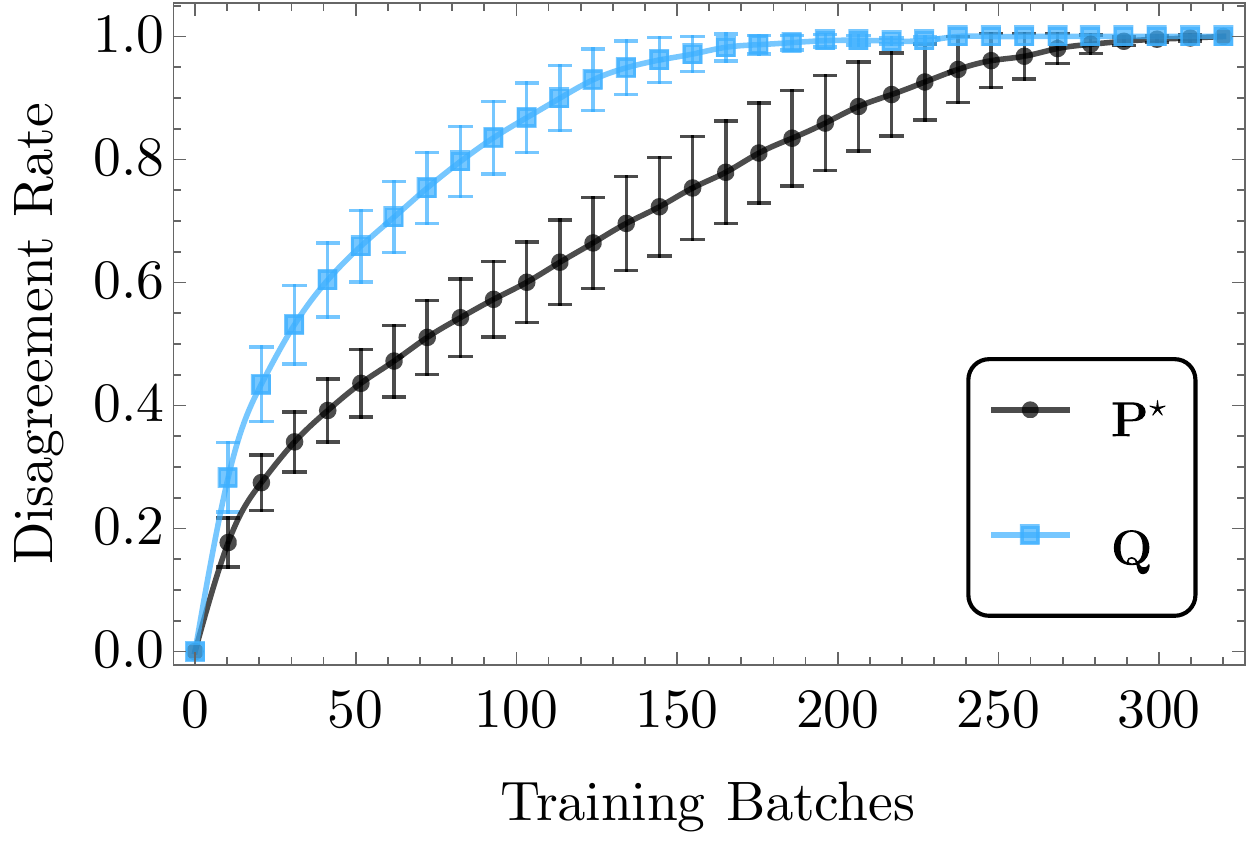}
      \end{center}
      \vspace{-4mm}
      \caption{\footnotesize \textbf{CDC Training Dynamics for Overparameterized Models:} Using the same experimental setup as the runtime study in \autoref{fig:runtime} we record the associated CDC disagreement rate on both unseen in distribution data $\Pun$ (CIFAR 10) and a known covariate shift $\boldQ$ (CIFAR 10.1). When we trained such an overparameterized model for too long the disagreement rate approaches $1$ on both in and out-of-distribution datasets leading to a maximally uninformative test; it is, therefore, crucial to perform early stopping at or before the out-of-distribution disagreement rate reaches 1.}
        \label{fig:overparamertizeddynamics}
\end{wrapfigure}

Restated with the ideas introduced in this work, the above translates to the CDC objective being more easily satisfied when there is a harmful shift.
This hypothesis is exemplified in \autoref{fig:overparamertizeddynamics} to the right, where we see that the CDC disagreement rate grows significantly more rapidly when $\boldQ$ is chosen to be out-of-distribution (blue curve) versus in distribution (black curve).
However, as the models are overparameterized, the disagreement rate eventually reaches $1$ independently of whether we use the true OOD data $\boldQ$ or the ID data $\Pun$. Since our test is computed based on the disagreement rate $\phi_\boldQ$ falling above $1-\alpha$ quantile of the calibration distribution of $\phi_\boldP$ the test will be highly uninformative if $\phi_\boldQ=\phi_\boldQ=1$.

Note that it is critical to use the exact same training algorithm on both the given test set $\boldQ$ and the in-distribution calibration set $\Pun$ to ensure the statistical soundness of the \method, so one can tune the CDC learning algorithm on either $\boldQ$ or $\Pun$ to achieve a desired result then apply the exact same algorithm to the other set and test for statistical differences. In our experiments with overparameterized models, we picked a relatively low number of training iterations to use (see \autoref{subsec:constr} for details) practically eliminating the issue of overfitting. However, in general, one should fix a training budget for CDCs so that they are far from reaching an ID disagreement rate of $\phi_\boldP=1$. Knowing the exact level of $\phi_\boldP$ to stop at will depend on the dataset and learning algorithm used but as a general guideline we suggest $\approx 0.5$ to allow for a large gap between ID and OOD disagreement rates while giving enough of a compute budget to achieve non-trivial results.  

\section{Experimental Details}
\label{sec:expdet}

\subsection{Base Classifiers}
\label{subsec:basclf}
For each dataset used in our experiments, we begin by training a \textit{base classifier} on the source domain portion of the dataset to use in subsequent experiments and baselines.
For a brief description of the datasets used and base classifiers, see \autoref{tab:datasets} and for a more detailed description of each dataset as well as what we have considered precisely as the source and shifted domains, see \autoref{subsec:predata}.

\smallbreak\noindent
\xhdr{CIFAR 10}
We use a standard Resnet18 model pre-trained on ImageNet~\citep{deng2009imagenet} made available in the torchvision library~\citep{torchvision} although we reinitialize the last network layer to have an output size of $10$.
We use stochastic gradient descent (SGD) with a base learning rate of $0.1$, $L_2$ regularization of $5\times 10^{-4}$, momentum of $0.9$, a batch size of $128$ and a cosine annealing learning rate schedule with a maximum 200 iterations stepped once per epoch for a total of 200 epochs.
We use the standard CIFAR-10 training split normalized by its mean ($\mu=[0.4914, 0.4822, 0.4465]$) and standard deviation ($\sigma=[0.2023, 0.1994, 0.2010])$.
Every epoch, we randomly crop each image to a size of $32\times 32$ after applying a $0$ padding of four pixels to each spatial dimension, and we apply a horizontal flip with probability $0.5$.
This model achieves a test performance of $87\%$.
While this score is far from state-of-the-art on CIFAR-10, our goal is not to construct a perfect model.
We wish to create a \textit{reasonably good} model as an example of a model that could realistically be deployed in real-world settings.
When training deep ensembles, we only vary the random seed in the range $[0,\ldots ,4]$.

\smallbreak\noindent
\xhdr{Camelyon 17}
We follow a similar approach to CIFAR 10.
However, we use two output features (for binary classification of cancerous or benign pathology), a batch size of $512$, the ADAM optimizer~\citep{DBLP:journals/corr/KingmaB14} with a base learning rate of $0.001$, $L_2$ regularization of $10^{-5}$ and a total of $5$ training epochs for which we select the model with the best validation accuracy.
This model achieves a test accuracy of $0.93$.
When training deep ensembles, we only vary the random seed in the range $[0,\dots,4]$.

\smallbreak\noindent
\xhdr{UCI Heart Disease}
We train both neural networks and gradient boosted trees using the XGboost library~\citep{xgb}.
For the neural network model, we use a simple MLP with an input dimension of $9$, $3$ hidden layers of size $16$ with ReLU activation followed by a $30\%$ dropout layer and a linear layer to $2$ outputs (heart disease present or not).
We use $358$ samples for training and $120$ for validation.
We train for a maximum of $1000$ epochs and select the model with the highest AUC on the validation set, performing early stopping if the validation AUC has not increased in over 100 epochs.
This model achieves a test AUC computed on 119 samples of 0.85.
As with CIFAR 10 and Camelyon 17, we only vary the random seed in the range $[0,\dots,9]$ when training deep ensembles. Note that we chose a larger ensemble size here as models are fairly cheap to train.
Another important trick when using small $\boldQ$ sizes is to sample all of $\boldQ$ in each batch filling the best with a random set of samples from $\boldP$. Of procedure artificially inflates the size of $\boldQ$ so the hyperparameter $\lambda$ must account for this by picking up an extra multiplicative factor equal to $(\text{batches per epoch})^{-1}$.

When training gradient boosted trees using XGboost we employ standard library parameters ($\eta=0.1$, eval\_metric$=$auc, max\_depth$=6$, subsample$=0.8$, colsample\_bytree=$0.8$, min\_child\_weight$=1$, objective=binary:logistic, num\_round$=10$).
This model while taking less then $5s$ to train achieves a test AUC of 0.88.

\subsection{Constrained Disagreement Classifiers}
\label{subsec:constr}
We expand on the experimental details for learning constrained disagreement classifiers (\autoref{algo:constrained}).
When training a CDC $g_{(f,\boldP, \boldQ)}$ we start by creating a new dataset that combines all elements of the labeled set $\boldP$ and the unlabeled set $\boldQ$ with pseudo labels inferred by the base classifier $f$.
We store a single bit for each sample in the combined dataset to indicate if a sample was originally drawn from $\boldP$ or $\boldQ$.
When training CDCs with neural networks, we use the DCE loss ($\autoref{eq:loss}$)
under similar semantics as the pseudo-code implementation provided above.
When training discrete models, we resort to our generalized approach in \autoref{subsec:extension}.
To reduce training time we initialize $g$ using the exact same architecture/weights
as $f$ and apply the exact same optimization algorithm/learning rate used to train $f$ (see \autoref{subsec:basclf}).
For CIFAR 10, we train each CDC for a maximum of 10 epochs performing early stopping if the model drops in in-distribution validation performance
by over 5\%.

We enforce the early stopping criteria to help prevent CDCs from overfitting to the disagreement loss when the target dataset has not come from a harmfully shifted domain.
The intuition is the following: under the null, if a target dataset $\boldQ$ comes from the same distribution as a training dataset $\boldP$, then learning to disagree with $f$ on $\boldQ$ while constrained to agree on all of $\boldP$ can only be solved by overfitting to predict with high entropy on the specific examples in $\boldQ$, versus learning a distinct pattern that distinguishes the distributions.
Forcing a model to predict with high entropy on a subset of in-distribution datapoints can only hurt its associated in-distribution generalization, a phenomenon which we can directly assess by measuring validation performance.

The details for training CDCs on Camelyon 17 are the same as those described for CIFAR 10, however due to the large training set size ($302436$ samples) we simply select a random subset of size $50,000$ as $\boldP$ at each epoch -- a number we experimentally deemed as sufficient to achieve low in-distribution generalization error.
When training CDCs on the UCI Heart Disease dataset, we use XGBoost~\citep{xgb} with the same hyperparameters described in \autoref{subsec:basclf}.

For the runtime experiment presented in \autoref{fig:runtime} we train each CDC for only one batch, where each batch contains a set of 100 samples $\boldQ$ and is filled up to a batch size of 512 with random samples from $\boldP_{\text{train}}$. After every batch we eliminate all samples where the CDC disagrees with the base predictions. 
We continue this for a maximum of 150 batches, but perform early stopping if 10 batches pass without at least one sample getting disagreed on.

\subsection{Description of Baseline Methods}
\label{subsec:baselines}
We compare the \method\ against several methods for OOD detection, uncertainty estimation and covariate shift detection found in recent literature.
\begin{enumerate}
    \item \textit{Deep Ensembles} shown by~\cite{trustuncert} to provide the most accurate estimates of predictive confidence under covariate shift.
To compare directly with \method\ we test both the disagreements rates and the entropy distributions of the ensemble. See \autoref{sec:hyp} for more information on how these tests are run.
    \item \textit{Black Box Shift Detection (BBSD)}~\citep{bbsd} is overall best method across numerous synthetic benchmarks for covariate shift detection evaluated by~\citeauthor{failloud}.
We follow the same evaluation and perform a univariate KS test on each dimension of the softmax output of the base classifier between $\boldQ$ and a held out set from the training distribution.
Bonferroni correction is used to compute a single $p$-value as the minimum value divided by the number of tests.
We guarantee significance using the same permutation approach described in \autoref{sec:hyp}.
    \item \textit{Relative Mahalanobis Distance (RMD)}~\citep{relmahala} (a method designed specifically for identifying near OOD samples) using the penultimate layer of a pretrained model.
We test for covariate shift by performing a KS test directly on the distribution of RMD confidence scores derived on $\boldQ$ and $\Pun$.
    \item \textit{Classifier two sample test (CTST)}~\citep{paz2017revisiting}. Using the same architecture as and initialization as the base classifier we reconfigure the output layer and we train 
    a domain classifier on half the test data with source data labeled as 0 and test data as 1. We then test this models accuracy on the other half of the test data and compare its performance to random chance using a binomial test (see \autoref{sec:hyp} for more details).
    While this method is technically sound it is not suitable for the low data regime where learning a domain classifier on half the test data is unlikely to generalize beyond random performance on the other half.
    \item \textit{Deep Kernel MMD}~\citep{liu2020learning}. We use the authors original source code available at \url{https://github.com/fengliu90/DK-for-TST} to perform the deep kernel MMD test.
    \item \textit{H-Divergence}~\citep{zhao2022comparing}. Most similar to our approach, this work proposes a two sample test based on the output of a learning model after training on either source or target data. Specifically, the authors fit a model to both the source dataset $\Pc$, the target dataset $\Qc$ and a uniform mixture $(\Pc+\Qc)/2$. 
    Under the null hypothesis $\Pc=\Qc$ the loss in each case is equal in expectation. However when $\Pc \neq \Qc$, the generalized entropy of the mixture distribution may be be larger. In practice the authors fit three VAE~\citep{DBLP:journals/corr/KingmaW13} models and compute the test statistic $\ell((\Pc+\Qc)/2) -\min(\ell(\Qc), \ell(\Pc))$, where $\ell$ is the VAE loss 
    computed as a sum of the binary cross entropy reconstruction loss and the KL divergence regularizer. The perform 100 runs where the null hypothesis (e.g. sample $\boldQ$ from $\Pc$) and one where it does not. Significance is determined in the standard way be observing if the true test statistic exceeds the $95^\text{th}$ percentile of the test statistic distribution under the null hypothesis.   
    Unfortunately this method, while state of the art on several benchmarks including the MNIST vs Fake MNIST two sample test, demonstrated low utility on more complex tasks with smaller sample sizes. After a discussion with the authors, we attempted to improve the results by first pretraining the VAE to produce valid samples and reconstructions under the source distribution and computing the H-Divergence statistic
    after finetuning. Despite this effort, we still was low statistical significance with small sample sizes likely due to the noisy nature of training VAE's in the low data regime. We use the authors original source code available here \url{https://github.com/a7b23/H-Divergence}.
\end{enumerate}

\subsection{Comparison to Generalization Error Prediction}
Related to distribution shift detection is the problem of estimating out of distribution generalization error on unlabeled data.
Any method that estimates generalization error can be thought of as a regression from a dataset on a real valued test statistic $\psi$, which represents the estimated generalization error.
As such we can calibrate the distribution of $\psi$ based on unseen iid data in $\Pun$ and test for distribution shift by determining
if the predicted generalization error on $\boldQ$ is within the $5\%$ extreme of the calibration distribution.

Many recent methods~\citep{pmlr-v48-platanios16, pmlr-v162-yu22i, garg2022leveraging} have been proposed to address this problem.
While~\citeauthor{garg2022leveraging} propose the simplest approach their method is conceptually identical to BBSD when applied to the task of shift detection.
Hence we compare only with the Projection Norm approach of~\citeauthor{pmlr-v162-yu22i} as it presents the current state of the art and provides a well documents experimental repository.
A comparison on the CIFAR10/10.1 benchmark is found in~\autoref{tab:projnorm} and a further analysis of the scaling is given in~\autoref{tab:projscale}.
Ultimately, the Projection Norm presents another useful approach for identifying covariate shift, however the computational complexity is at least equal or in most cases greater than the \method\ and the performance is in all cases lower.

\begin{table}[!htb]
    \centering
    \caption{Comparison of Detectron (Entropy) and the Projection Norm~\citep{pmlr-v162-yu22i}. We report the TPR@5 for both methods using sample sizes of $10$, $20$ and $50$.}
    \vspace{6mm}
\begin{tabular}{cccc}
\toprule
& $|\boldQ|=10$ & $|\boldQ|=20$ & $|\boldQ|=50$ \\ \midrule[1.5pt]
ProjNorm~\citep{pmlr-v162-yu22i} & $.11\pm .03$ & $.17\pm.04$ & $.39\pm .05$ \\ \midrule
Detectron (Entropy) & $\mathbf{.35\pm.05}$ & $\mathbf{.56\pm.05}$ & $\mathbf{.92\pm .03}$ \\ \midrule
\end{tabular}
    \label{tab:projnorm}
\end{table}

\begin{table}[!htb]
    \centering
    \caption{Projection Norm results~\citep{pmlr-v162-yu22i} on larger samples sizes of CIFAR10/10.1.
    We observe that $\approx500$ samples are required to reach near perfect test power.}
    \vspace{6mm}
    \begin{tabular}{ccccc}
\toprule[1.5pt]
& $|\boldQ|=100$ & $|\boldQ|=200$ & $|\boldQ|=500$ & $|\boldQ|=1000$ \\ \midrule[1.5pt]
ProjNorm~\citep{pmlr-v162-yu22i} & $.48\pm .05$ & $.76\pm .04$ & $.99\pm .01$ & $1.0\pm 0$ \\ \midrule
\end{tabular}
    \label{tab:projscale}
\end{table}

\section{Datasets}
\label{sec:data}

\subsection{Sources and Licensees}\label{subsec:sources-and-licensees}
\begin{itemize}
    \item \href{https://peltarion.com/knowledge-center/documentation/terms/dataset-licenses/cifar-10}{CIFAR-10} (MIT License Copyright (c) 2013 Valay Shah)
    \item \href{https://camelyon17.grand-challenge.org/Data/}{Camelyon-17} (CC0 1.0 Universal Public Domain Dedication)
    \item \href{https://archive.ics.uci.edu/ml/datasets/Heart+Disease}{UCI Heart Disease} (Creative Commons Attribution 4.0 International)
\end{itemize}

\subsection{Prepossessing, Shift Descriptions and Model Performance}
\label{subsec:predata}
We provide full details on the three datasets used in our experiments, including any preprocessing steps and what splits we considered as source domain $\Pc$ and target domain $\Qc$.

\xhdr{CIFAR 10/10.1}
We use the well known CIFAR 10 dataset~\citep{krizhevsky2014cifar} as the source domain for training base classifiers (\autoref{subsec:basclf}) and the new CIFAR 10 test set CIFAR 10.1 containing 2000 class balanced images~\citep{cifar101} as a source of harmful distribution shift. Although the images in CIFAR 10.1 appear to be visually very similar to CIFAR 10 most classifiers trained on CIFAR 10 drop significantly in performance (3\% to 15\%~\citep{cifar101}) when tested on CIFAR 10.1.
\begin{figure}[!htb]
    \centering
    \includegraphics[width=\linewidth]{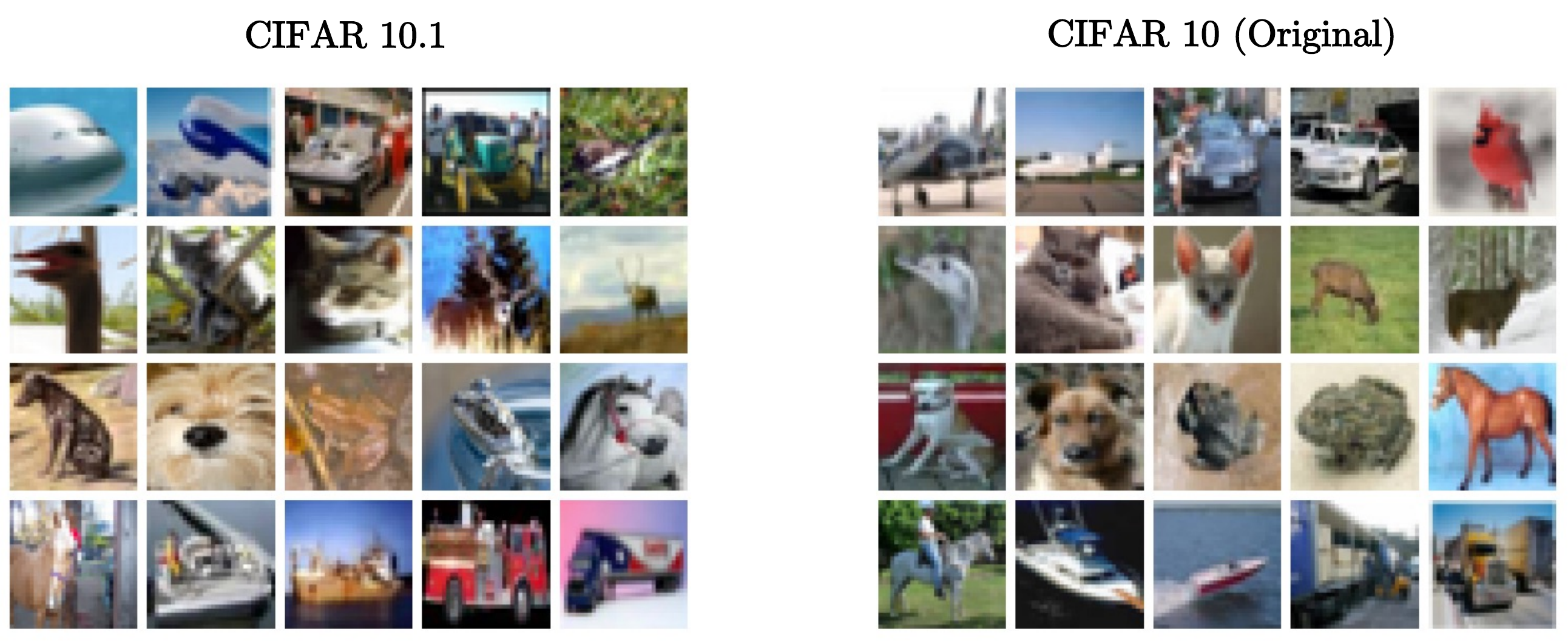}
    \caption{Cifar 10 vs Cifar 10.1. (Image borrowed from the technical report "Do CIFAR-10 Classifiers Generalize to CIFAR-10?"~\citep{ciafr10101})}
    \label{fig:cifar101}
\end{figure}

\xhdr{Camelyon 17}
As described by the original authors~\citep{camelyon} the Camelyon benchmark is a new and challenging image classification dataset consisting of 327,680 color images ($96 \times 96$) extracted from histopathologic scans of lymph node sections.
Each image is annotated with a binary label indicating the presence of metastatic tissue in the center $32\times 32$ pixel region.
In our experiments, we use the WILDS~\citep{wilds} framework to facilitate download, preprocessing, as well as source/target splits for Camelyon.
As shown in \autoref{fig:camelyon}, the source domain is chosen as data from hospitals $1,2,3$.
In contrast, the test domain is collected from hospital $5$, which visually shows significantly higher contrast due to different data acquisition equipment/methods.
\begin{figure}[!htb]
    \centering
    \hspace{-1cm}
    \includegraphics[width=\linewidth]{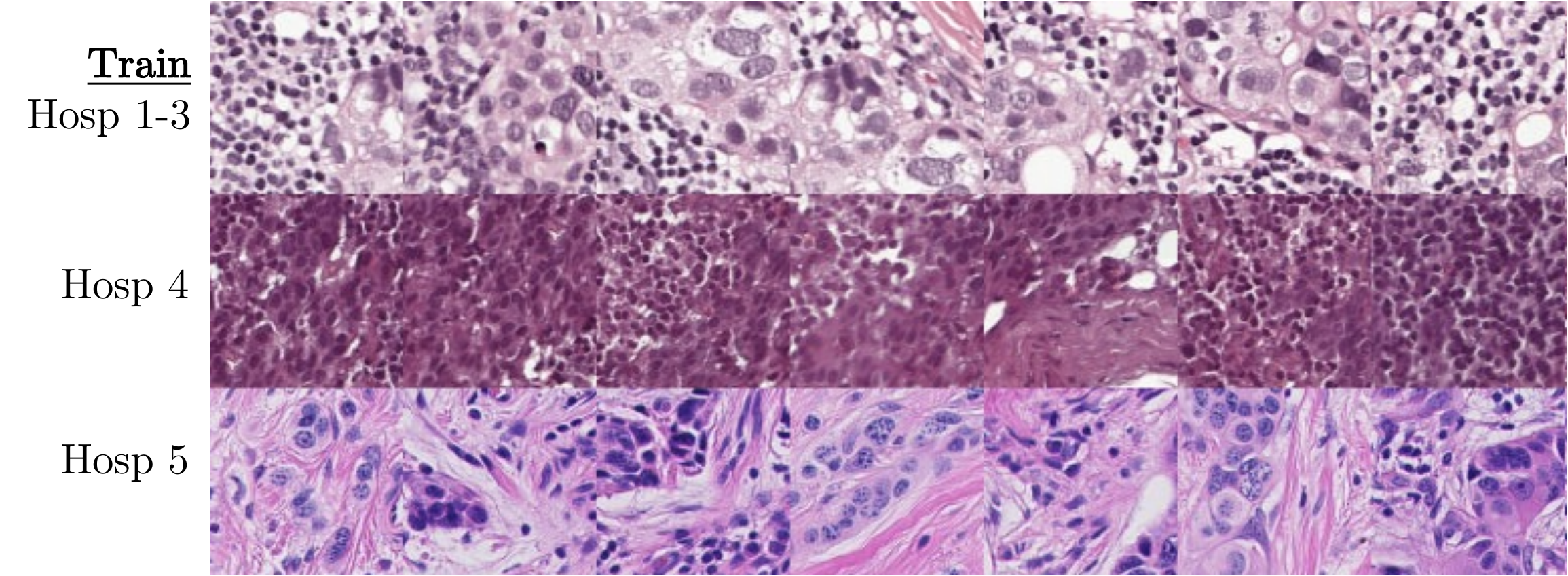}
    \caption{Samples images from the Camelyon 17 dataset~\citep{camelyon}. Using the standard set by the WILDS framework~\citep{wilds} we use hospitals 1-3 as the source domain for training and validating models, and hospital $5$ as the target domain for assessing distribution shift.}
    \label{fig:camelyon}
\end{figure}

\xhdr{UCI Heart Disease}
The UCI Heart Disease (UCI-HD) dataset~\citep{misc_heart_disease_45} consists of 76 attributes collected from four unique patient databases in Cleveland, Hungary, Switzerland, and the VA Long Beach.
We select nine features out of the commonly used 14 to minimize the portion of missing values.
These features are \{age, sex, chest pain type, resting blood pressure, serum cholesterol, fasting blood sugar, resting electrocardiographic results, maximum heart rate achieved, exercise-induced angina\}.
The prediction task is to determine the diagnosis of heart disease (also known as angiographic disease status), which is given in a range from 0-4, where 0 indicates healthy and 1-4 indicates a severity level based on the narrowing of major blood vessels.
Following prior work~\citep{Chaki2015ACO} we only consider the simplified binary classification task for differentiating patients with a normal angiographic status (label of 0) from those with abnormal status (label $>0$).
We select the source domain as the Cleveland and Hungary databases and the target domain as the Switzerland and VA Long Beach databases.
A graphical overview of the marginal feature distributions for the source and target domain is shown in \autoref{fig:uci}.

\begin{figure}
    \centering
    \includegraphics[width=0.85\linewidth]{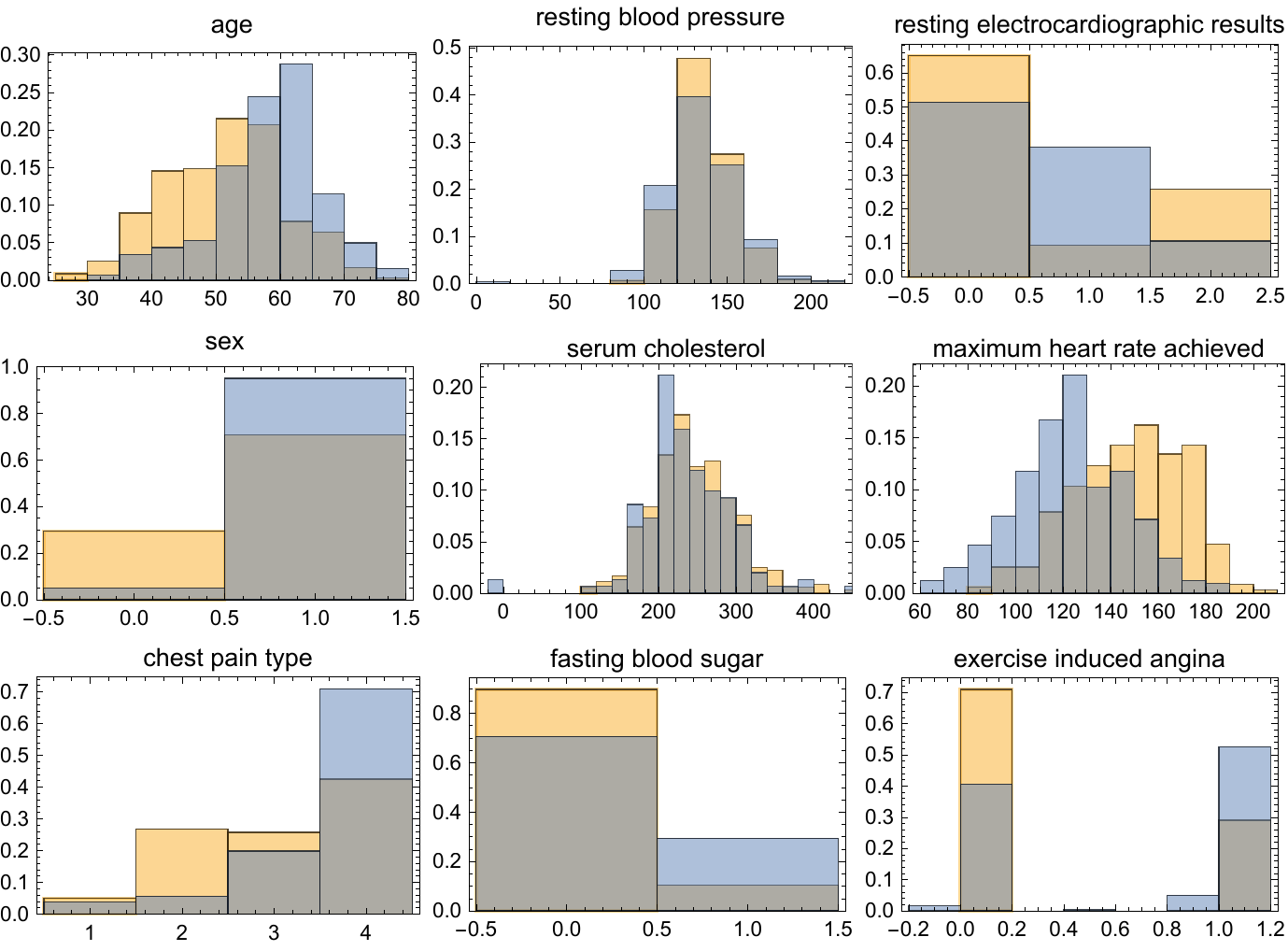}
    \caption{Marginal distributions for each of the nine variables chosen from the UCI Heart Disease dataset for our experiments. The source domain (yellow) is chosen from the Cleveland and Hungary patient databases,
        while the shifted target domain (blue) is selected from the Swiss and Long Beach databases. Although the distributions are visually similar, a simple neural network classifier that achieves an AUC of 0.85 on the source domain drops to 0.42 on the target domain.}
    \label{fig:uci}
\end{figure}

To allow for out-of-the-box training of deep neural networks on the UCI HD dataset, we use the missing value synthesis functional in Wolfram Mathematica~\citep{Mathematica}.
The algorithm uses density estimation and mode finding on conditioned distributions to synthesize missing values.
See the language guide page titled \href{https://www.wolfram.com/language/12/high-level-machine-learning/synthesize-missing-values-in-numeric-data.html?product=language}{Synthesize Missing Values in Numeric Data} for a more detailed description.
To aid in future research, we provide a copy of our processed dataset in \href{https://anonymous.4open.science/r/detectron-D7BA/data/uci_heart_torch.pt}{\texttt{<our github repo>/data/uci\_heart.pt}}.

A summary of the three datasets used as well as the description of shifts and effects on model performance is provided in \autoref{tab:datasets}.
\begin{table}[!htb]
    \centering
    \setlength\tabcolsep{1.5pt}
    \renewcommand{\arraystretch}{0.5}
    \small
    \caption{\small \textbf{Datasets:} We investigate three different forms of covariate shift. To verify that these shifts are indeed harmful to the models, we report performance in both the shifted and unshifted domains.
    Examples and further descriptions of unshifted/shifted splits of each dataset are given in \autoref{sec:data}.}
    \vspace{.5cm}
    \resizebox{.9\textwidth}{!}{
        \begin{tabular}{cccccc}
            \toprule[1.5pt]
            Domain/Task & {Dataset} & \thead{Shift} & \thead{Metric} & \thead{(Unshifted)} & \thead{(Shifted)} \\\midrule[1.5pt]
            \thead{Natural Images \\\emph{Object classification}} & \thead{CIFAR-10/10.1 \\~\citep{cifar101}} & \thead{Data Collection \\ Process} & \thead{Accuracy} & \thead{0.87 (Resent18)} & \thead{0.77 (Resent18)} \\\midrule
            \thead{Histopathological Images \\ \emph{Metastases Detection}} & \thead{Camelyon-17 \\\citep{camelyon}} & \thead{Different \\Hospitals} & \thead{Accuracy} & \thead{0.93 (Resent18)} & \thead{0.81 (Resent18)} \\\midrule
            \thead{Tabular Medical Data \\\emph{Angiographic Status}} & \thead{UCI Heart Disease \\\citep{misc_heart_disease_45}} & \thead{Different \\Countries} & \thead{AUROC} & \thead{0.88 (xgboost)\\0.85 (MLP)} & \thead{0.70 (xgboost)\\0.42 (MLP)} \\\midrule
        \end{tabular}}
    \label{tab:datasets}
    \vspace{-3mm}
\end{table}

\section{Expanding on Generalization Regions}
\label{sec:genreg}
We expand on the concept of the generalization region $\Rc$ proposed in \autoref{sec:detectron}.
We highlight that not all covariate shifts will be harmful to the performance of a model, as in many cases, the model will generalize to a region more extensive than the support of the training distribution.
We informally refer to this region as $\Rc$ and note that characterizing it precisely is intractable as it depends on the complex interaction between a model, dataset, and learning algorithm.
When detecting distribution shifts, we are primarily concerned with those that will harm the performance of a predictive model in deployment;
those are shifts that are not simply any changes in the training distribution but ones that assign a high probability to samples outside of the generalization region. The \method\ aims to use not just a model and dataset to detect shifts but also the learning algorithm to tie the detection of shifts directly with $\Rc$.

While this argument is informal, we present an experiment to show that a non-trivial generalization region does exist in a simple machine learning task. \autoref{fig:mnist_rot} shows that a LeNet-5 model~\citep{10.1162/neco.1989.1.4.541} trained on rotated MNIST images can match in-distribution performance when tested on various rotations outside of its training set.
We train this model for ten epochs using the ADAM optimizer at a base learning rate of $0.001$.
By leveraging a model's existing learning algorithm when detecting shifts using the \method\, we can detect if new and unlabeled datasets are likely to belong to $\Rc$ by training a new model that is constrained to learn the same behavior over $\Rc$ while encouraging it to predict randomly otherwise.
While this experiment is simplistic, it shows that learning algorithms can generalize to sets outside their training distribution.

\begin{figure}
    \centering
    \includegraphics[width=\linewidth]{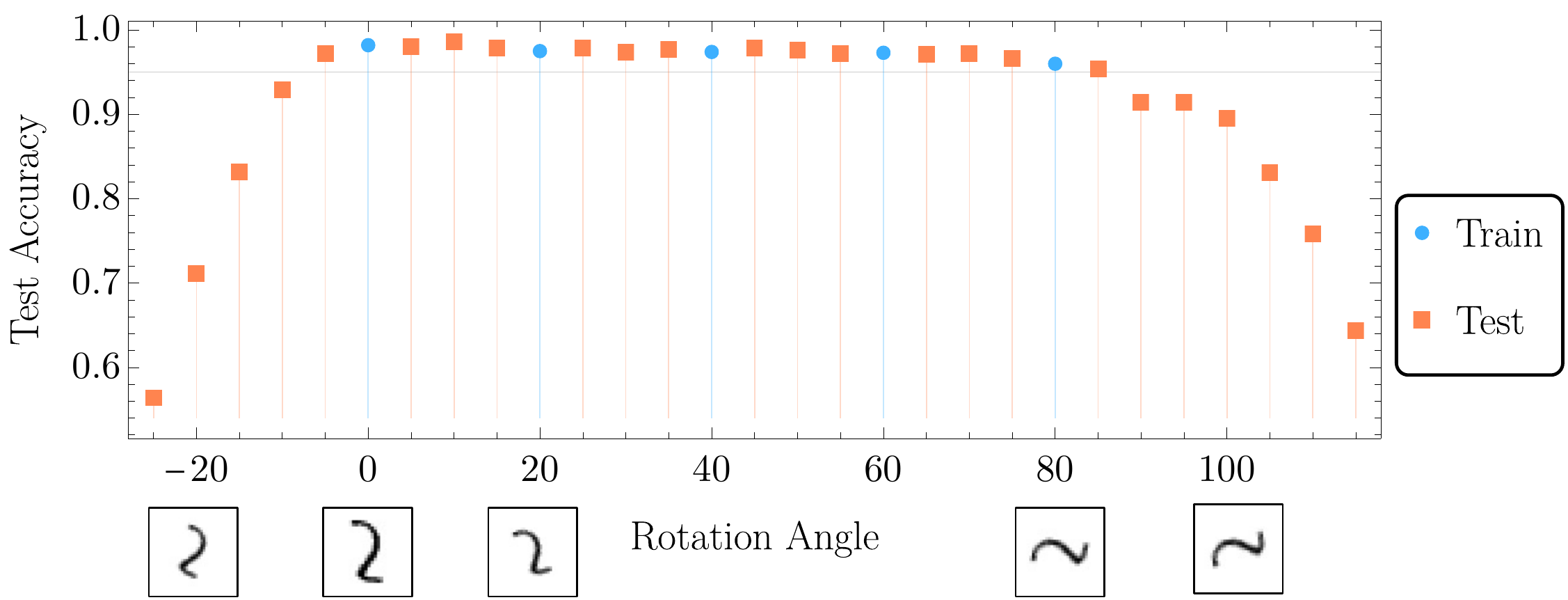}
    \caption{We train a CNN with the LeNet-5 architecture~\citep{10.1162/neco.1989.1.4.541} on a dataset consisting of MNIST images rotated from $0$ to $80$ degrees counterclockwise in steps of $20$. We test the model using rotations of $-25$ to $125$ degrees in steps of $5$. We observe that the model achieves nearly identical test accuracy on all test angles between $-5$ and $85$, indicating that it has generalized to angles outside of its training set.}
    \label{fig:mnist_rot}
\end{figure}

\section{Harmful Covariate Shift and Connection to $\mathcal{A}$ Distance}
\label{sec:harmfulshiftapp}
Given a labeled training set $\boldP$ as well as another labeled dataset $\boldQ$, one can identify using standard statistical estimation if a model $f$
performs more poorly on $\boldQ$ compared to $\boldP$.
However, in a practical scenario, decision models are deployed on unlabeled datasets;
hence directly computing model performance is impossible.
To decide then if $\boldQ$ has been drawn from a distribution that may cause $f$ to fail, we formulate an adversarial learning style definition of \textit{harmful covariate shift} that does not require access to
labeled examples.

\xhdr{Definition: ($\ell, \alpha, F$)-Harmful Covariate Shift}\\
A covariate shift from distributions $\Pc \to \Qc$ over $X$ is $(\ell, \alpha, F)$-harmful with respect to a set of decision models $F$,
if there exists any subset of models of two or more models $\mathbf{f}$ in $F$ that achieve a source domain loss $\ell(f, \Pc) \leq \alpha$ for all $f\in \mathbf{f}$ while
being more likely to disagree with each other on an unseen sample from $\Qc$ compared to $\Pc$.
\begin{align}
    \exists\ \mathbf{f}\subseteq F, &\text{ s.t. } \forall f\in \mathbf{f}\ \ell(f,\Pc) \geq \alpha \text{ and } \\
    &\mathbb{P}_{x\sim \Qc}(\exists\ f_i, f_j \in \mathbf{f}\text{ s.t. }f_i(x)\neq f_j(x)) > \mathbb{P}_{x\sim \Pc}(\exists\ f_i, f_j \in \mathbf{f}\text{ s.t. }f_i(x)\neq f_j(x))
\end{align}
In plainer words, we define harmful covariate shift based on the existence of multiple \textit{good} models on $\Pc$
that tend to disagree on $\Qc$.
The Detectron algorithm is designed to learn these models (constrained disagreement classifiers) and statistically test their disagreement rates.

We can connect our definition of harmfulness to the well-studied concept of $\mathcal{A}$ distance from~\cite{atheory}.
The $\mathcal{A}$ is a generalization of the total variation to an arbitary collection of measurable events $\mathcal{A}$.
\begin{equation}
    d_{\mathcal{A}}\left(\mathcal{P}, \mathcal{Q} \right)=2 \sup _{A \in \mathcal{A}}\left|\operatorname{Pr}_{\mathcal{D}}[A]-\operatorname{Pr}_{\mathcal{D}^{\prime}}[A]\right|
\end{equation}
\cite{domainrep} shows that when they chose a class of events whose characteristic functions are functions in $F$, the $\mathcal{A}$ distance in connection with VC theory~\citep{vapnik95}
allows for finite sample generalization bounds on the performance of arbitrary decision models from $F$ under covariate shift.~\cite{domainrep} go on to show that the $\mathcal{A}$ distance defined for a binary function class $F$ is equal to
\begin{equation}
    d_F(\Pc, \Qc) = 2\left(1-2 \min_{f\in F}\text{err}(f)\right)
\end{equation}
where $\min_{f\in F}\text{err}(f)$ is the minimum error that a domain classifier from $F$ can achieve on the task of distinguishing samples from $\Pc$ and $\Qc$
(i.e. if $\Pc=\Qc$ the best domain classifier will have error of 0.5 and $d_F(\Pc, \Qc)=0$ and if $\Pc$ and $\Qc$ can be perfectly discriminated by some $f\in F$ the $d_F(\Pc, \Qc)$ is maximized and equal to $2$).
In our characterization of harmful covariate shift, we consider not just the discriminative power of $F$ but the broader generalization region (\autoref{sec:genreg}) induced by training $f$ to achieve a certain source domain loss on $\Pc$.
For instance, if a model naturally learns rotational invariance, as shown in \autoref{fig:mnist_rot}, one would also want to use a shift detector that will not detect shifts that only comprise of rotations.
Beyond the concept of harmfulness, we empirically show that learning to detect shifts using CDCs instead of domain classifiers improves shift detection performance.

\section{On Model Complexity and the Detectron}
\label{sec:complexity}
In our methodology, we aim to specifically detect covariate shifts that lead to unpredictable behavior of models over a given function class $F$.
To better understand how the complexity of $F$ changes the behavior of the Detectron, we present a simple example in \autoref{fig:hcs}.

\begin{figure}
    \centering
    \includegraphics[width=\linewidth]{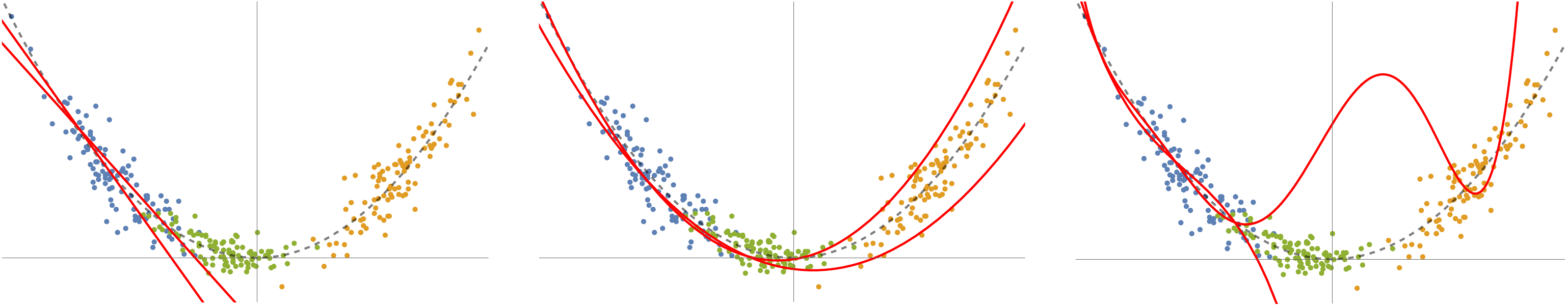}
    \caption{\small Investigating the relation between CDC model complexity and the ability to identify covariate shift.
    We consider a toy example where the ground truth labels are generated using a quadratic decision boundary, shown as a black dashed line.
    The blue points correspond to training samples, and the orange/green points correspond to two different covariate shifts, one closer to the training distribution and the other further.
        (Left) When we choose an overly simplified model family (e.g., linear classifiers), there exists no CDC that reports different explanations for the orange and green points.
        (Center) When we choose a quadratic function family, there exists enough variation within the space of models that explain the training set to offer different explanations on the distance shift (orange) but not on the near shift (green).
        (Right) When we learn from an overly expressive function family (polynomials of degree 3+), the space of models that explain the training set can offer different explanations of even near covariate shifts (green points).}
    \label{fig:hcs}
\end{figure}

The Detectron is designed to work in the central and right case of \autoref{fig:hcs} (e.g., where we learn a function class $F$ that contains the true underlying mechanism that labels our data).
However, the types of shifts considered harmful will change with the complexity of $F$.
In the ideal case where the model class complexity matches that of the true mechanism,
we see that the model family does not allow significant variation on the decisions for points outside but close to the training distribution meaning the Detectron will not, by design, detect nearby shifts.
When models are overly complex, more types of shifts should naturally be considered harmful as there is more variation than models from $F$ can possess outside of the training domain, meaning we can never guarantee that we have chosen the correct model.
In the last case, where the model family is too simple to contain the true mechanism, the Detectron loses its power to identify shifts that may result in performance penalties, as would any comparable method from OOD/uncertainty estimation that relies on a sufficiently well-calibrated classifier.
The inability of our approach to handle this case is an unavoidable limitation of our methodology.
However, models that are simpler than their underlying mechanism will often exhibit poor held performance even on in-distribution tasks, drastically limiting the likelihood of deployment.
In general, choosing a model family that is well specified for a given task is nontrivial and still an open question in machine learning.
This concern is somewhat diminished in cases where models are over-parameterized (e.g., deep neural networks) or models are built using expert knowledge related to the true underlying mechanism, as is often the case in practical machine learning problems.

\end{document}